\documentclass[runningheads]{llncs}

\usepackage[T1]{fontenc}
\usepackage[utf8]{inputenc}
\usepackage[numbers,sort&compress]{natbib}
\usepackage{hyperref}
\usepackage{url}
\usepackage{booktabs}
\usepackage{amsfonts}
\usepackage{amsmath}
\usepackage{amssymb}
\usepackage{nicefrac}
\usepackage{microtype}
\usepackage{xcolor}
\usepackage{multirow}
\usepackage{graphicx}
\graphicspath{{./}{../}}
\usepackage{enumitem}

\begin{document}

\title{Expected Free Energy as Belief-Dependent Utility for $\rho$-POMDPs}

\titlerunning{EFE as Belief-Dependent Utility for $\rho$-POMDPs}

\author{Patrick Cooper \and Alvaro Velasquez}

\authorrunning{P. Cooper and A. Velasquez}

\institute{Department of Computer Science, University of Colorado Boulder,\\ Boulder, CO, USA\\
\email{\{patrick.cooper,alvaro.velasquez\}@colorado.edu}}

\maketitle

\begin{abstract}
An agent acting under partial observability must decide when to gather information and which observations are worth their cost. Standard POMDPs value information only through its eventual effect on reward. The $\rho$-POMDP framework instead rewards uncertainty reduction directly, through a belief-dependent utility $\rho$, but in practice both the choice of $\rho$ and the weight placed on it are tuned by hand for every task. We show that active inference removes this tuning entirely. Minimizing Expected Free Energy (EFE) is exactly equivalent to solving a $\rho$-POMDP whose utility is expected information gain, and the exploration weight is fixed at $w{=}1$ because the variational bound expresses pragmatic and epistemic value in the same units (nats). We prove this equivalence for observe-then-commit POMDPs and extend it to factored observation POMDPs, a broader class that covers interleaved observe-act problems such as non-destructive testing and mobile sensing, where gathering information leaves the hidden state unchanged. Experiments support the theory. Across environments ranging from the classic Tiger problem to RockSample and a new Structural Inspection benchmark with over $65{,}000$ states, the untuned weight matches or outperforms reward-only planning at the same horizon, avoids the over-exploration of bonuses tuned per task, and sits near the reward-maximizing knee of the success--reward Pareto frontier. The practical payoff is an exploration objective that works out of the box. In applications such as fault detection and medical screening, where every test has a price and every missed fault has a cost, EFE supplies a belief-dependent utility that is derived rather than tuned.
\keywords{Active inference \and Expected free energy \and $\rho$-POMDPs \and Epistemic value \and Belief-dependent utility \and Partial observability \and Information gain}
\end{abstract}

\section{Introduction}

Decision-making under partial observability requires agents to balance exploiting current knowledge against gathering information to reduce uncertainty about hidden states. In standard POMDPs, information gathering has no intrinsic value. It is useful only insofar as it leads to higher expected reward. This creates a well-known difficulty: the exploration--exploitation trade-off must be resolved either by the planning horizon or by heuristic exploration bonuses.

The $\rho$-POMDP framework \citep{araya2010} addresses this by extending POMDPs with a belief-dependent utility $\rho(b)$ that allows the agent to derive value directly from properties of its belief state. This enables explicit optimization over uncertainty reduction, information gain, or other belief-state properties alongside task reward. However, the choice of $\rho$ remains largely heuristic and environment-specific.

Separately, the Active Inference (AIF) framework \citep{friston2010, parr2019} casts perception and action as approximate Bayesian inference, selecting policies that minimize Expected Free Energy (EFE). EFE naturally decomposes into a pragmatic term (goal-seeking) and an epistemic term (information-seeking). Because both terms come from a single variational objective, their relative scale is not free to choose. In the discrete-state formulation we use, the coefficient on expected information gain is fixed at $w{=}1$ when both terms are measured in nats, the natural-log unit of information (Proposition~\ref{prop:equivalence}). The weight is a consequence of the derivation, not a hyperparameter.

We propose substituting EFE as $\rho$ in $\rho$-POMDPs, yielding an agent whose epistemic foraging is a consequence of its objective rather than an engineered bonus.\footnote{Code and experiment data are available at \url{https://github.com/PatrickAllenCooper/rho_aif}.} Our contributions are:
\begin{enumerate}
    \item \textbf{Theory.} A formal bridge between $\rho$-POMDPs and active inference. We prove the equivalence for observe-then-commit POMDPs (Proposition~\ref{prop:equivalence}), characterize when the canonical weight is near-optimal (Proposition~\ref{prop:nearopt}), and extend the equivalence to factored observation POMDPs, interleaved settings where information gathering preserves the hidden state (Proposition~\ref{prop:factored}).
    \item \textbf{Evidence.} Controlled comparisons against same-horizon planning, tuned information gain, and POMCP \citep{silver2010} across six observe-then-commit environments and four instances of the standard RockSample benchmark \citep{smith2004}. A Pareto analysis shows that $w{=}1$ Pareto-dominates same-horizon planning without per-environment search.
    \item \textbf{Practical guidance.} A characterization of when EFE-as-$\rho$ helps. The advantage appears when the agent must choose among multiple observation actions, and it grows with state space size ($66.5\%$ vs.\ $2.5\%$ success on Tileworld $8{\times}8$) and with the number of observation actions ($+7.82$ reward on RockSample[7,8]).
\end{enumerate}

\section{Related work}

\paragraph{POMDPs and solvers.}
POMDPs formalize sequential decision-making under state uncertainty \citep{smallwood1973, kaelbling1998}. Exact solutions are PSPACE-complete. Point-based offline methods (PBVI \citep{pineau2003}, HSVI \citep{smith2004}, SARSOP \citep{kurniawati2008}) approximate the value function on reachable beliefs \citep{shani2013}. Online solvers plan from the current belief: POMCP \citep{silver2010} uses MCTS with UCB1 and rollout evaluation, DESPOT \citep{ye2017} searches a regularized sparse belief tree, and POMCPOW \citep{sunberg2018} extends POMCP with progressive widening for continuous spaces. All explore implicitly through stochastic simulations rather than explicitly valuing information gain. Section~\ref{sec:discussion} and Appendix~\ref{app:pomcp} demonstrate the benefit of closed-form information valuation.

\paragraph{$\rho$-POMDPs.}
\citet{araya2010} introduced $\rho$-POMDPs, augmenting the reward with a belief-dependent utility $\rho : \Delta(S) \to \mathbb{R}$, so the objective becomes $\max_\pi \mathbb{E}_\pi[\sum_t \gamma^t (R(s_t,a_t) + \rho(b_t))]$. When $\rho$ is convex, the value function remains piecewise linear and convex (PWLC), preserving compatibility with standard solvers. \citet{fehr2018} extended this to Lipschitz-continuous non-convex $\rho$, and \citet{benchetrit2025} developed $\rho$-POMCPOW for continuous-space $\rho$-POMDPs. Common choices for $\rho$ (entropy reduction, KL divergence from a target belief, and information gain) each encode a different notion of epistemic value, but the choice remains heuristic and environment-specific. Our contribution is to derive a principled $\rho$ from the variational bound of active inference, fixing the exploration weight without per-environment search.

\paragraph{Value of information and experimental design.}
The idea that information has quantifiable decision-theoretic value predates both POMDPs and active inference. \citet{howard1966} formalized the value of information in decision analysis, and \citet{lindley1956} introduced expected information gain as a criterion for optimal Bayesian experimental design. In the bandit setting, information-directed sampling (IDS) \citep{russo2014} explicitly trades off instantaneous regret against information gain by minimizing the information ratio $\Gamma_t = \delta_t^2 / g_t$, where $\delta_t$ is the expected regret and $g_t$ is the information gain. The key structural difference from EFE is that IDS minimizes the ratio at each step (a relative weighting that adapts to the current belief), while EFE fixes the weight at $w{=}1$ (an absolute weighting derived from the variational bound). An IDS baseline adapted to the observe-then-commit structure would be informative but is beyond the scope of this work. Our observe-then-commit structure parallels sequential Bayesian experimental design, where the agent selects experiments (observations) to maximize information about an unknown state before making a terminal decision. The $\rho$-POMDP framework with $\rho = I(b)$ operationalizes this connection. Our contribution is to show that EFE derives a canonical weight for the information gain term from first principles rather than treating it as a tunable parameter.

\paragraph{Active inference and EFE.}
The Active Inference (AIF) framework \citep{friston2010} casts perception and action as variational inference under the free-energy principle. \citet{parr2022} provide a comprehensive textbook treatment. \citet{friston2015} formalized the decomposition of policy value into extrinsic (goal-seeking) and epistemic (information-seeking) components, showing that curiosity-driven exploration arises automatically from expected free energy minimization. \citet{dacosta2020} synthesized discrete-state AIF from first principles, showing that the posterior over policies takes the form $Q(\pi) \propto \exp(-\mathcal{G}(\pi))$ where $\mathcal{G}$ is the Expected Free Energy. \citet{parr2019} showed that EFE decomposes into pragmatic value (divergence from preferred observations) and epistemic value (expected information gain), with both arising from a single variational bound, requiring no tunable exploration weight. Despite different constructions, EFE and Generalised Free Energy produce identical policy posteriors. The mathematical foundations of EFE have been critically examined: \citet{millidge2021} showed that naively extending the variational free energy into the future does not yield exploratory behavior, proposing the Free Energy of the Expected Future (FEEF) as an alternative with clearer mathematical grounding. \citet{champion2024} addressed the ``unification problem'' by formalizing how multiple EFE formulations relate to a single root definition under different assumptions about prior preferences. \citet{devries2025} recast EFE-based planning as entropy-corrected variational inference with message-passing schemes, providing an alternative derivation that connects EFE to variational message passing.

\paragraph{Sophisticated inference.}
Standard AIF evaluates policies myopically. \citet{friston2021} introduced sophisticated inference, a recursive extension implementing deep tree search over belief trajectories rather than states. Sophistication (maintaining beliefs about future beliefs) enables counterfactual reasoning about the downstream epistemic consequences of actions. \citet{dacosta2020b} proved that this recursive scheme recovers Bellman-optimal policies for any finite horizon, whereas standard AIF achieves optimality only for single-step planning. Our recursive EFE agent (Equation~\ref{eq:efe_recursive}) is derived from this framework, adapted to the $\rho$-POMDP commit-action structure. The recursive counterfactual reasoning over future beliefs is preserved. What changes is that our observe-then-commit setting does not involve state transitions between time steps, simplifying the belief-trajectory computation.

\paragraph{Scaling active inference.}
Discrete-state AIF with full policy enumeration is limited to small state--action spaces. Several lines of work address scaling. \citet{fountas2020} combined deep generative models with Monte Carlo tree search for EFE-optimal planning in continuous state spaces. \citet{tschantz2020} developed an RL-compatible objective (the free energy of the expected future) that inherits AIF's exploration--exploitation balance while scaling to standard RL benchmarks. \citet{maisto2025} combined active inference with Monte-Carlo tree search for large POMDPs, achieving state-of-the-art on RockSample \citep{smith2004}. Our MCTS-EFE variant (Section~6) follows this direction, using EFE as a leaf heuristic within MCTS to extend planning horizons beyond exact tree search.

\paragraph{Exploration, control as inference, and intrinsic motivation.}
The control-as-inference perspective \citep{todorov2007, levine2018} casts reward maximization as variational inference. Maximum-entropy RL \citep{haarnoja2018} and stochastic optimal control \citep{rawlik2012} are algorithmic instances. \citet{millidge2020} proved formal equivalence between AIF and control-as-inference. \citet{sajid2021} showed that intrinsic motivation behaviors arise under EFE. Intrinsic motivation methods, including curiosity \citep{schmidhuber1991, pathak2017}, typologies of intrinsic signals \citep{oudeyer2007}, Bayesian surprise \citep{itti2009}, count-based exploration \citep{bellemare2016}, VIME \citep{houthooft2016}, and random network distillation \citep{burda2019}, all require tunable bonus weights. Bayes-Adaptive MDPs \citep{duff2002, guez2013} and Bayesian RL more broadly \citep{ghavamzadeh2015} address model uncertainty (unknown dynamics), distinct from our focus on state uncertainty under a known model. Our $\rho$-POMDP formalism makes the connection between EFE and these lines of work precise, inheriting formal properties (convexity, Lipschitz continuity) while fixing the exploration weight from first principles.

\section{Methodology}
\label{sec:methodology}

\subsection{The $\rho$-POMDP framework}

We restrict attention to observe-then-commit $\rho$-POMDPs, in which the action set $\mathcal{A}$ partitions into observation actions $\mathcal{A}_\text{obs}$ (which update the belief at known cost but do not change the hidden state) and terminal commit actions $\mathcal{A}_\text{com}$ (which end the episode with state-dependent reward). An episode consists of a variable-length sequence of observation actions followed by a single commit action. The hidden state is fixed throughout.

A $\rho$-POMDP extends the standard POMDP with a belief-dependent utility $\rho : \Delta(S) \rightarrow \mathbb{R}$. The agent's objective becomes:
\begin{equation}
    \pi^* = \arg\max_\pi \mathbb{E}_\pi \left[ \sum_{t=0}^{H} \gamma^t \left( R(s_t, a_t) + \rho(b_t) \right) \right]
\end{equation}
When $\rho = 0$, we recover the standard POMDP. When $\rho$ encodes information gain, the agent is explicitly rewarded for reducing uncertainty.

\subsection{Expected Free Energy as $\rho$}

In the standard AIF formulation, the EFE for a policy $\pi$ at future time $\tau$ is:
\begin{equation}
    \mathcal{G}(\pi) = \underbrace{- \mathbb{E}_{Q(o_\tau|\pi)}[\ln P(o_\tau|C)]}_{\text{Pragmatic value}} - \underbrace{\mathbb{E}_{Q(o_\tau|\pi)}\left[D_{\mathrm{KL}}\left[Q(s_\tau|o_\tau, \pi) \,\|\, Q(s_\tau|\pi)\right]\right]}_{\text{Epistemic value}}
\end{equation}
where $P(o_\tau|C)$ encodes preferred outcomes and $D_{\mathrm{KL}}$ measures expected information gain. In our observe-then-commit setting, the pragmatic and epistemic terms take concrete forms. For commit actions, the pragmatic term reduces to expected reward under the current belief: $\mathcal{G}(\text{commit}_i) = -\mathbb{E}_b[R_i]$. We use the reward matrix directly rather than encoding rewards through preferred outcome distributions $P(o|C)$, avoiding the preference-calibration issue flagged as a source of hidden tuning in prior AIF work. For observation actions, the pragmatic term is the known observation cost $c_k$, and the epistemic term is the expected information gain $I_k(b) = H(b) - \mathbb{E}_o[H(b'_o \mid \text{obs}_k)]$.

Following the sophisticated inference scheme of \citet{friston2021}, the EFE agent evaluates actions via a recursive tree search over belief states:
\begin{equation}
    \mathcal{G}(\text{observe}_k) = c_k - I_k(b) + \mathbb{E}_{o}\!\left[\min_a \mathcal{G}(a \mid b'_o)\right]
    \label{eq:efe_recursive}
\end{equation}
The agent selects $\arg\min_a \mathcal{G}(a)$. Standard AIF introduces a precision parameter $\beta$ via a softmax policy $Q(\pi) \propto \exp(-\beta \mathcal{G}(\pi))$. Our deterministic $\arg\min$ corresponds to $\beta \to \infty$, which eliminates this tunable knob. Equation~\ref{eq:efe_recursive} contains no separate exploration weight, and this is not a modeling shortcut. When EFE is written in nats, expected information gain enters on the same footing as the KL terms that define the variational objective. This shared scale is why the $\rho$-POMDP reduction carries $w{=}1$ exactly (Proposition~\ref{prop:equivalence}).

\subsection{Formal equivalence with $\rho$-POMDPs}
\label{subsec:formal_rho_equiv}

We generalize the standard $\rho$-POMDP formulation to action-dependent belief utilities $\rho(b, a)$, where the augmented reward becomes $R(s,a) + \rho(b,a)$. This is equivalent to a standard $\rho(b)$ formulation on an augmented belief-action space but avoids notational overhead. The structural results of \citet{araya2010} carry over when $\rho(\cdot, a)$ satisfies the relevant conditions for each fixed $a$.

Define $V(a, b, d) \triangleq -\mathcal{G}(a, b, d)$. Then Equation~\ref{eq:efe_recursive} becomes:
\begin{align}
    V(\text{observe}_k, b, d) &= -c_k + I_k(b) + \mathbb{E}_o\!\left[\max_a V(a, b'_o, d{+}1)\right] \label{eq:bellman_obs} \\
    V(\text{commit}_i, b, d) &= \mathbb{E}_b[R_i] \label{eq:bellman_commit}
\end{align}
This is exactly the Bellman recursion for a $\rho$-POMDP (Equation 1) with action-dependent belief utility:

\begin{proposition}
\label{prop:equivalence}
Define $\rho_{\mathrm{EFE}}(b, a)$ as:
\begin{equation*}
    \rho_{\mathrm{EFE}}(b, a) = \begin{cases} I_a(b) & \text{if } a \text{ is an observation action} \\ 0 & \text{if } a \text{ is a commit action} \end{cases}
\end{equation*}
where $I_a(b) = H(b) - \mathbb{E}_{o|a}[H(b'_o)]$ is the expected information gain from observation action $a$ at belief $b$. Then for undiscounted finite horizon ($\gamma{=}1$), minimizing recursive EFE (Eq.~\ref{eq:efe_recursive}) over horizon $H$ produces the same policy as solving the $\rho$-POMDP Bellman equation $V^*(b) = \max_a \{R(b,a) + \rho_{\mathrm{EFE}}(b,a) + \mathbb{E}_o[V^*(b'_o)]\}$ over the same horizon.
\end{proposition}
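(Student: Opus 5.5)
The plan is to identify the two recursions term by term and then argue by backward induction on the remaining depth that the action-values coincide at every reachable belief. That gives the same $\arg\max$/$\arg\min$, hence the same policy.

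\textbf{Step 1: fix conventions.} Let $d$ index depth and $H$ the horizon. At $d=H$ only commit actions are admissible, or equivalently observation actions are assigned value $-\infty$; this convention must be the same on both sides. Write the $\rho$-POMDP action-value as $Q^*(a,b,d)=R(b,a)+\rho_{\mathrm{EFE}}(b,a)+\mathbb{E}_{o\mid a}[V^*(b'_o,d{+}1)]$, with $V^*(b,d)=\max_a Q^*(a,b,d)$. In the observe-then-commit structure the immediate rewards are:
\begin{itemize}
\item $R(b,\text{observe}_k)=\mathbb{E}_b[R(s,\text{observe}_k)]=-c_k$, since costs are state-independent;
\item $R(b,\text{commit}_i)=\mathbb{E}_b[R_i]$.
\end{itemize}
A commit ends the episode, so its continuation term is zero. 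Because the hidden state is fixed under observation actions, the belief update $b'_o$ is the pure Bayes update $b'_o(s)\propto P(o\mid s,a)\,b(s)$, with no transition step. This is the same $b'_o$ that appears in Eq.~\ref{eq:efe_recursive}, and the observation distribution $P(o\mid b,a)$ used for $\mathbb{E}_o$ is identical in both recursions.

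\textbf{Step 2: induction on depth.} The claim is that $Q^*(a,b,d)=V(a,b,d)=-\mathcal{G}(a,b,d)$ for all $a$, all reachable $b$, and all $d\le H$. The induction runs from $d=H$ downward.

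\emph{Base case and commit actions.} For commit actions at any depth, Eq.~\ref{eq:bellman_commit} gives $V(\text{commit}_i,b,d)=\mathbb{E}_b[R_i]=Q^*(\text{commit}_i,b,d)$, because $\rho_{\mathrm{EFE}}=0$ there.

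\emph{Inductive step for observation actions.} Assume the claim holds at $d{+}1$. Then
$$V(\text{observe}_k,b,d)=-c_k+I_k(b)+\mathbb{E}_o\bigl[\max_a V(a,b'_o,d{+}1)\bigr]=-c_k+\rho_{\mathrm{EFE}}(b,\text{observe}_k)+\mathbb{E}_o\bigl[V^*(b'_o,d{+}1)\bigr].$$
The first equality is Eq.~\ref{eq:bellman_obs}; the second uses the inductive hypothesis. The right-hand side is exactly $Q^*(\text{observe}_k,b,d)$. The sign flip is legitimate because $\min_a\mathcal{G}=-\max_a V$, so negating Eq.~\ref{eq:efe_recursive} yields Eqs.~\ref{eq:bellman_obs}--\ref{eq:bellman_commit}, as already noted in the text.

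\textbf{Step 3: conclude the policy claim.} Since the action-values agree pointwise, $\arg\min_a\mathcal{G}(a,b,d)=\arg\max_a Q^*(a,b,d)$ at every belief and depth. Ties are handled by using the same tie-breaking rule in both. The undiscounted assumption $\gamma=1$ is used only here: it lets the continuation enter without a $\gamma$ factor. The recursive EFE carries no discount, so the correspondence is exact only when $\gamma=1$. Finally, $I_k(b)$ is measured in nats, the same unit as the KL terms, so no rescaling constant appears and the weight on $\rho$ is exactly $w=1$.

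\textbf{Main obstacle.} I expect the hardest step to be the bookkeeping that makes the two recursions literally the same, rather than any analysis. Two points need care:
\begin{itemize}
\item \emph{Where $\rho$ is accrued.} The $\rho$-POMDP objective in Eq.~1 adds $\rho(b_t)$ at every step, whereas EFE adds $I_k$ only when an observation is taken. I would resolve this with the action-dependent $\rho(b,a)$ introduced in Section~\ref{subsec:formal_rho_equiv}, viewed as a state-based $\rho$ on the augmented belief-action space.
\item \emph{Horizon and termination.} The terminal commit and the forced commit at depth $H$ must be handled identically in both recursions. Otherwise the base cases differ, and the values could diverge at the last step.
\end{itemize}
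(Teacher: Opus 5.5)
Your proposal is correct and follows essentially the same route as the paper's proof in Appendix~\ref{app:proof}. Both negate $\mathcal{G}$ to obtain $V$, match commit actions ($\rho_{\mathrm{EFE}}=0$) and observation actions ($R(b,\text{obs}_k)=-c_k$, $\rho_{\mathrm{EFE}}=I_k(b)$) term by term, and conclude by induction on the remaining horizon $H-d$ that the action-values, and hence the $\arg\min$/$\arg\max$ policies, coincide. Your explicit conventions for the forced commit at depth $H$, the shared Bayes update and observation distribution, and tie-breaking simply make precise some bookkeeping that the paper leaves implicit.
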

\begin{proof}[Proof sketch]
The negation $V = -\mathcal{G}$ converts $\arg\min \mathcal{G}$ to $\arg\max V$. Substituting into Equation~\ref{eq:efe_recursive}: for observation actions, $V = -c_k + I_k(b) + \mathbb{E}_o[\max_{a'} V(a', b'_o)]$, which matches the $\rho$-POMDP Bellman backup with $R(b, \text{obs}_k) = -c_k$ and $\rho = I_k(b)$. For commit actions, $V = \mathbb{E}_b[R_i]$ with $\rho = 0$, matching a terminal $\rho$-POMDP action. The recursive structure is identical, so the policies agree at every belief node.
\end{proof}

Proposition~\ref{prop:equivalence} makes precise what EFE-as-$\rho$ means: the epistemic term of EFE functions as an action-dependent belief utility. Crucially, this is equivalent to Planning+IG with $w{=}1$ (in nats). EFE does not eliminate the weight. It derives a canonical weight from the variational bound, fixing $w{=}1$ without per-environment search. Whether this canonical choice is near-optimal is an empirical question we address in Section~\ref{sec:pareto}. The following result characterizes the conditions under which $w{=}1$ is near-optimal for expected reward.

\begin{proposition}
\label{prop:nearopt}
Consider a two-state observe-then-commit $\rho$-POMDP with uniform prior $b(s_0){=}b(s_1){=}\tfrac{1}{2}$, a single observation action (accuracy $p > \tfrac{1}{2}$, cost $c > 0$), and two commit actions (correct reward $R^+$, incorrect penalty $R^- < 0$ with $|R^-| > R^+$). Define the reward asymmetry ratio $\alpha = |R^-|/R^+$ and the informativeness ratio $\eta = I_{\max}/c$ where $I_{\max} = \ln 2 - H_{\text{post}}(p)$ is the maximum expected information gain in nats. At $H{=}1$, the minimum weight $w^*_{\mathrm{thresh}}$ at which observing yields higher expected reward than committing immediately is:
\begin{equation*}
    w^*_{\mathrm{thresh}} = \frac{c \;-\; \left(p - \tfrac{1}{2}\right)(R^+ - R^-)}{\,I_{\max}\,} = \frac{c \;-\; \left(p - \tfrac{1}{2}\right)(1 + \alpha)\,R^+}{\,I_{\max}\,}
\end{equation*}
For any $w > w^*_{\mathrm{thresh}}$ the agent observes before committing, yielding identical (and higher) expected reward. The threshold $w^*_{\mathrm{thresh}}$ is negative, making $w{=}1$ trivially sufficient, whenever $\alpha > c / [(p - \tfrac{1}{2})\,R^+] - 1$. In particular, $w^*_{\mathrm{thresh}} \to -\infty$ as $\alpha \to \infty$ for any fixed $p > \tfrac{1}{2}$ and $c > 0$: high reward asymmetry makes observation so valuable that any positive weight suffices.
\end{proposition}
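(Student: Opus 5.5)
The plan is to write the $H{=}1$ Bellman backup of Proposition~\ref{prop:equivalence} explicitly, with the epistemic term scaled by a general weight $w$ (Planning+IG with weight $w$). Then compare the value of observing once and committing against the value of committing immediately. At $H{=}1$ the agent at the uniform prior has two kinds of option. It can commit immediately, or it can take the single observation action and then be forced to commit at the posterior, where $\rho_{\mathrm{EFE}}=0$ for the commit. I will read $H{=}1$ as allowing exactly this one observation followed by a forced commit. The threshold is the value of $w$ at which these two values are equal.

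\textbf{Commit now.} By symmetry of the uniform prior, both commit actions have the same value, $\tfrac12 R^+ + \tfrac12 R^-$.

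\textbf{Observe, then commit.} First I determine the post-observation commit. With accuracy $p>\tfrac12$, observing $o=s_j$ gives the posterior $b'(s_j)=p$. Committing to $s_j$ is then optimal, because
\[
pR^+ + (1-p)R^- > (1-p)R^+ + pR^-,
\]
which is equivalent to $(2p-1)(R^+-R^-)>0$. The expected commit reward after observing is therefore $pR^+ + (1-p)R^-$.

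Next I compute the information gain term. The channel is symmetric and the prior is uniform, so $H(b)=\ln 2$ and both outcomes yield a posterior with binary entropy $H_{\text{post}}(p)$. Hence $I(b)=\ln 2 - H_{\text{post}}(p) = I_{\max}$, which is positive for $p>\tfrac12$. The augmented value of observing is
\[
-c + wI_{\max} + pR^+ + (1-p)R^- .
\]

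\textbf{Threshold.} Subtracting the commit-now value gives the difference
\[
-c + wI_{\max} + \left(p-\tfrac12\right)(R^+-R^-).
\]
Because $I_{\max}>0$, this difference is positive exactly when $w$ exceeds
\[
\frac{c-(p-\tfrac12)(R^+-R^-)}{I_{\max}} .
\]
This is the stated $w^*_{\mathrm{thresh}}$. Since $R^-<0$, we have $R^+-R^- = R^+ + |R^-| = (1+\alpha)R^+$, which gives the second form.

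\textbf{Policy and reward above the threshold.} For every $w>w^*_{\mathrm{thresh}}$ the selected policy is the same (observe, then follow the observation). The expected task reward, which excludes the $\rho$ term, is therefore identical across all such $w$. That reward exceeds the commit-now reward precisely when
\[
\left(p-\tfrac12\right)(1+\alpha)R^+ > c,
\]
i.e.\ exactly when $w^*_{\mathrm{thresh}}<0$. Solving this for $\alpha$ gives the condition $\alpha > c/[(p-\tfrac12)R^+]-1$. Under that condition $w{=}1$ (indeed any $w\ge 0$) lies above the threshold. For the limit, the numerator is affine in $\alpha$ with slope $-(p-\tfrac12)R^+<0$, while the denominator $I_{\max}$ does not depend on $\alpha$. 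Hence $w^*_{\mathrm{thresh}}\to-\infty$ as $\alpha\to\infty$.

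\textbf{Expected obstacle.} The algebra is routine; the main difficulty is interpretive. The claim ``identical (and higher) expected reward'' holds in the ``higher'' sense only when the threshold is negative. When $w^*_{\mathrm{thresh}}>0$, an agent with a large $w$ observes even though committing immediately would earn more true reward. I would state this caveat explicitly, separating the augmented objective that determines the policy from the task reward used to evaluate it. I would also note the tie-breaking convention at exact equality $w=w^*_{\mathrm{thresh}}$, which is why the statement uses the strict inequality $w>w^*_{\mathrm{thresh}}$.
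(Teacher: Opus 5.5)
Your proposal is correct and follows essentially the same route as the paper's proof sketch. Both compare the $H{=}1$ observe-then-commit value $-c + wI_{\max} + pR^+ + (1-p)R^-$ against the immediate-commit value $(R^+ + R^-)/2$, set the net gain to zero, and substitute $R^- = -\alpha R^+$. Your extra checks (that committing to the observed state is optimal after observing, and that $I(b)=I_{\max}$ at the uniform prior) are correct, and so is your caveat that the task reward is \emph{higher} than committing immediately only when $w^*_{\mathrm{thresh}}<0$; the paper's sketch leaves all three implicit.
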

\begin{proof}[Proof sketch]
At $H{=}1$, the agent observes once and commits. A Planning+IG agent with weight $w$ observes iff $-c + w \cdot I(b) + \max_i \mathbb{E}_{b'_o}[R_i] > \max_i \mathbb{E}_b[R_i]$, where the left side is the observe-then-commit value and the right side is the immediate commit value. For the uniform prior, the immediate commit value is $(R^+ + R^-)/2$. After one observation, the posterior concentrates: with probability $p$ the agent is correct, yielding expected commit value $p \cdot R^+ + (1-p) \cdot R^-$. The net gain from observing is $(p - \frac{1}{2})(R^+ - R^-) - c + w \cdot I_{\max}$. Setting this to zero gives $w^*_{\mathrm{thresh}} = [c - (p-\frac{1}{2})(R^+ - R^-)]/I_{\max}$. Substituting $R^- = -\alpha R^+$ yields the second form. When $\alpha \gg 1$, the marginal reward improvement $(p - \frac{1}{2})(1 + \alpha)R^+$ dominates the cost $c$, making $w^*_{\mathrm{thresh}}$ negative, so the agent should observe at any $w \geq 0$, including $w{=}1$.
\end{proof}

Table~\ref{tab:alpha_eta} validates Proposition~\ref{prop:nearopt} across our environments, computing $\alpha$, $\eta$, the threshold $w^*_{\text{thresh}}$, and comparing against the observed reward-maximizing weight from the Pareto sweep (Section~\ref{sec:pareto}).

\begin{table}[ht]
\centering
\caption{Reward asymmetry ($\alpha$), informativeness ($\eta$), observation threshold $w^*_{\text{thresh}}$ from Proposition~\ref{prop:nearopt}, and observed reward-maximizing weight from the Pareto sweep. When $w^*_{\text{thresh}} < 0$, any positive weight (including $w{=}1$) induces observation.}
\label{tab:alpha_eta}
\begin{small}
\begin{tabular}{lcccccc}
\toprule
Environment & $\alpha$ & $\eta$ & $w^*_{\text{thresh}}$ & $w{=}1$ sufficient? & $w^*_{\text{ret}}$ (observed) \\
\midrule
Testbed & $1.0$ & $7.5$ & $+0.60$ & Yes, but over-explores & $0.5$ \\
Tiger & $10.0$ & $0.6$ & $-5.04$ & Yes ($\alpha \gg 1$) & $1.0$ \\
Diagnosis & $5.0$ & $0.5$ & $-2.34$ & Yes ($\alpha \geq 5$) & $0.5$ \\
Bandit & $1.1$ & $1.2$ & $-0.12$ & Yes (marginal) & $1.0$ \\
Tileworld & $5.0$ & $0.5$ & $-2.34$ & Yes ($\alpha \geq 5$) & $0.5$ \\
\bottomrule
\end{tabular}
\end{small}
\end{table}

The proposition is stated for $H{=}1$ and two states. The multi-step case is harder to analyze because the observation threshold shifts with belief, but the qualitative prediction holds: environments where $\alpha \geq 5$ (Tiger, Diagnosis, Tileworld) have $w^*_{\text{thresh}} \ll 0$, meaning $w{=}1$ is far above the threshold and near-optimal. The low-asymmetry Testbed ($\alpha{=}1$) has $w^*_{\text{thresh}} > 0$ but below 1, so $w{=}1$ still induces observation but assigns more weight to information than is instrumentally optimal, consistent with our finding that EFE over-explores there (Appendix~\ref{app:testbed}). On the Bandit ($\alpha{=}1.1$), $w^*_{\text{thresh}}$ is slightly negative at $H{=}1$. At $H{>}1$, multi-step planning amplifies the value of each observation, bringing the effective reward-maximizing weight toward 1.

\paragraph{Extension to discounting.} With $\gamma < 1$, the recursive EFE becomes $\mathcal{G}(\text{obs}_k) = c_k - I_k(b) + \gamma\, \mathbb{E}_o[\min_a \mathcal{G}(a, b'_o)]$, giving $V(\text{obs}_k) = -c_k + I_k(b) + \gamma\, \mathbb{E}_o[\max_a V(a, b'_o)]$. The information gain term $I_k(b)$ appears undiscounted at the current step, while future values are discounted. This preserves the $\rho$-POMDP equivalence with $\rho(b,a) = I_a(b)$, but the effective ratio of epistemic to pragmatic weight increases at early steps relative to late steps. In the undiscounted case ($\gamma{=}1$), both terms are weighted equally at every depth. With $\gamma < 1$, the agent places relatively more value on immediate information gain compared to future reward, producing slightly more exploratory behavior at early steps. Our experiments (Appendix~\ref{app:discount}) confirm that performance is robust across $\gamma \in \{0.9, 0.95, 0.99, 1.0\}$.

\paragraph{Extension to factored observation POMDPs.}
\label{par:frontier}
Proposition~\ref{prop:equivalence} assumes the observe-then-commit structure. We now extend the equivalence to a broader class that includes interleaved observe-act POMDPs.

\begin{definition}[Factored observation POMDP]
\label{def:factored}
A POMDP is a factored observation POMDP if its state decomposes as $s = (s_{\mathrm{vis}}, s_{\mathrm{hid}})$ where $s_{\mathrm{vis}}$ is fully observable and $s_{\mathrm{hid}}$ is hidden, and the action set partitions into: (i)~observation actions $\mathcal{A}_{\mathrm{obs}}$ that produce observations about $s_{\mathrm{hid}}$ without changing $s_{\mathrm{hid}}$ (though they may change $s_{\mathrm{vis}}$), (ii)~navigation actions $\mathcal{A}_{\mathrm{nav}}$ that change $s_{\mathrm{vis}}$ deterministically without changing $s_{\mathrm{hid}}$ and produce no informative observation, and (iii)~exploitation actions $\mathcal{A}_{\mathrm{exp}}$ that yield reward dependent on $s_{\mathrm{hid}}$.
\end{definition}

Observe-then-commit POMDPs are the special case where $\mathcal{A}_{\mathrm{nav}} = \emptyset$ and $s_{\mathrm{vis}}$ is trivial. RockSample \citep{smith2004} is an instance: $s_{\mathrm{vis}}$ is the agent's grid position (fully observable), $s_{\mathrm{hid}}$ is the vector of rock qualities (hidden, fixed), check actions are observation actions, moves are navigation actions, and sample/exit are exploitation actions.

\begin{proposition}
\label{prop:factored}
In a factored observation POMDP (Definition~\ref{def:factored}), let $b$ denote the belief over $s_{\mathrm{hid}}$. For any action $a$ that preserves $s_{\mathrm{hid}}$ (i.e., $a \in \mathcal{A}_{\mathrm{obs}} \cup \mathcal{A}_{\mathrm{nav}}$), the transition--observation coupling term vanishes: $\Delta_T(b, a) = 0$, and $\rho_{\mathrm{EFE}}(b, a) = I_a(b)$ for observation actions, $\rho_{\mathrm{EFE}}(b, a) = 0$ for navigation actions.
\end{proposition}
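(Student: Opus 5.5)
We first make the coupling term explicit, since the statement presupposes it. For a general action $a$ with transition $T_a$, the EFE epistemic term is taken relative to the \emph{predicted} belief. Write $\hat b_a(s') = \sum_s T_a(s' \mid s)\, b(s)$ for this predicted belief. The term is
\[
\mathbb{E}_{o \mid a}\!\left[D_{\mathrm{KL}}\!\left[b'_o \,\|\, \hat b_a\right]\right] = H(\hat b_a) - \mathbb{E}_{o\mid a}[H(b'_o)].
\]
Adding and subtracting $H(b)$ splits it into $I_a(b) + \Delta_T(b,a)$, with $\Delta_T(b,a) \triangleq H(\hat b_a) - H(b)$. This $\Delta_T$ is the part of the epistemic value created by the transition rather than by the observation. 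The proof then has two steps: first show $\hat b_a = b$ on $s_{\mathrm{hid}}$, so $\Delta_T = 0$; then evaluate the remaining mutual-information term separately for $\mathcal{A}_{\mathrm{obs}}$ and $\mathcal{A}_{\mathrm{nav}}$.

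\textbf{Step 1 (vanishing coupling).} Fix $a \in \mathcal{A}_{\mathrm{obs}} \cup \mathcal{A}_{\mathrm{nav}}$. By Definition~\ref{def:factored}, $T_a$ acts as the identity on $s_{\mathrm{hid}}$. Hence it factors as $T_a\big((s_{\mathrm{vis}}', s_{\mathrm{hid}}') \mid (s_{\mathrm{vis}}, s_{\mathrm{hid}})\big) = \mathbb{1}[s_{\mathrm{hid}}' = s_{\mathrm{hid}}]\; T^{\mathrm{vis}}_a(s_{\mathrm{vis}}' \mid s_{\mathrm{vis}})$. Marginalising over $s_{\mathrm{vis}}'$ (which is known, because $s_{\mathrm{vis}}$ is fully observable) gives $\hat b_a(s_{\mathrm{hid}}) = b(s_{\mathrm{hid}})$. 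Therefore $H(\hat b_a) = H(b)$ and $\Delta_T(b,a) = 0$.

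\textbf{Step 2 (identifying $\rho_{\mathrm{EFE}}$).} With $\Delta_T = 0$, the epistemic term reduces to $H(b) - \mathbb{E}_{o\mid a}[H(b'_o)]$.
\begin{itemize}
\item For $a \in \mathcal{A}_{\mathrm{obs}}$, this is by definition $I_a(b)$, the mutual information between $o$ and $s_{\mathrm{hid}}$ under $b$.
\item For $a \in \mathcal{A}_{\mathrm{nav}}$, the observation is uninformative: its distribution given the known $s_{\mathrm{vis}}$ does not depend on $s_{\mathrm{hid}}$. Bayes' rule then gives $b'_o = b$ for every $o$, so the term is $0$.
\end{itemize}
Substituting these values into the recursion \eqref{eq:bellman_obs} reproduces, action by action, the $\rho$-POMDP backup used in the proof of Proposition~\ref{prop:equivalence}. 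The only change is that the continuation value is now indexed by the new $s_{\mathrm{vis}}'$ as well as $b'_o$. So the equivalence argument of Proposition~\ref{prop:equivalence} goes through unchanged.

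\textbf{Main obstacle.} The delicate point is the $s_{\mathrm{vis}}$ component. If $T^{\mathrm{vis}}_a$ or the $s_{\mathrm{vis}}$ readout depended on $s_{\mathrm{hid}}$, observing $s_{\mathrm{vis}}'$ would itself leak information about $s_{\mathrm{hid}}$. Step 1 would still hold, since the marginal over $s_{\mathrm{hid}}$ is unchanged, but Step 2 would fail for navigation actions.

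We handle this by reading Definition~\ref{def:factored} as requiring $s_{\mathrm{vis}}$ dynamics to be conditionally independent of $s_{\mathrm{hid}}$. For navigation actions this is explicit: they are deterministic in $s_{\mathrm{vis}}$. For observation actions, any $s_{\mathrm{hid}}$-dependence of the $s_{\mathrm{vis}}$ update can be absorbed into the observation $o$, so that $I_a(b)$ is the mutual information with the joint signal $(o, s_{\mathrm{vis}}')$. The statement then holds as written in both cases. RockSample satisfies the stronger reading directly, because moves and checks update the position independently of rock quality.
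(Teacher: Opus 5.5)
Your proposal is correct and follows essentially the paper's argument. Because $T_{\mathrm{hid}}$ is the identity for $a \in \mathcal{A}_{\mathrm{obs}} \cup \mathcal{A}_{\mathrm{nav}}$, the transition-aware belief over $s_{\mathrm{hid}}$ coincides with the observation-only one, so the coupling vanishes and the epistemic term reduces to $I_a(b)$ for observation actions and to $0$ for uninformative navigation actions.

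The one difference is in how you formalize $\Delta_T$, which the paper never defines. Its sketch takes $\Delta_T$ to be $D_{\mathrm{KL}}[b'_{o,T} \,\|\, b'_o]$, whereas you use the entropy gap $H(\hat b_a) - H(b)$ from an exact decomposition of the EFE epistemic term around the predicted belief; this ties $\Delta_T$ to $\rho_{\mathrm{EFE}}$ more directly. You also explicitly treat a possible $s_{\mathrm{hid}}$-dependence of the $s_{\mathrm{vis}}$ update, a point the paper's update $b'(s_{\mathrm{hid}}) \propto P(o \mid s_{\mathrm{hid}}, s'_{\mathrm{vis}}, a)\, b(s_{\mathrm{hid}})$ leaves implicit.
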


\begin{proof}[Proof sketch]
When action $a$ preserves $s_{\mathrm{hid}}$, the transition on the hidden component is $T_{\mathrm{hid}}(s'_{\mathrm{hid}} | s_{\mathrm{hid}}, a) = \delta(s'_{\mathrm{hid}} = s_{\mathrm{hid}})$. The belief update over $s_{\mathrm{hid}}$ is then $b'(s_{\mathrm{hid}}) \propto P(o | s_{\mathrm{hid}}, s'_{\mathrm{vis}}, a)\, b(s_{\mathrm{hid}})$, depending only on the observation likelihood, identical to the observe-then-commit case. The posterior that would incorporate transitions, $b'_{o,T}$, coincides with the observation-only posterior $b'_o$, so $D_{\mathrm{KL}}[b'_{o,T} \| b'_o] = 0$. For observation actions, EFE reduces to cost minus information gain plus expected continuation, matching the $\rho$-POMDP Bellman equation with $\rho = I_a(b)$. For navigation actions, the observation is uninformative ($I_a(b) = 0$), giving $\rho = 0$.
\end{proof}

Proposition~\ref{prop:factored} extends the formal bridge from observe-then-commit to any POMDP where the hidden state is preserved by information-gathering and navigation actions. The agent interleaves observation, navigation, and exploitation. At each decision point, the EFE-as-$\rho$ equivalence holds for the observation and navigation subtree. This covers RockSample, mobile sensor placement, and sequential testing with spatial access costs, where the agent must navigate to observation locations before gathering information. Section~\ref{sec:rocksample} validates this extension empirically across four RockSample instances.

The factored observation structure is common in practice whenever the quantity being measured is static or slow-changing relative to the decision horizon (Table~\ref{tab:taxonomy}). The key requirement, that $T_{\mathrm{hid}}(s'_{\mathrm{hid}} | s_{\mathrm{hid}}, a) = \delta(s'_{\mathrm{hid}} = s_{\mathrm{hid}})$ for observation and navigation actions, breaks when information-gathering itself alters the hidden state. In such settings, the coupling term $\Delta_T \neq 0$ and the canonical-weight equivalence does not hold. We discuss this further in the limitations paragraph of Section~\ref{sec:discussion}.

\begin{table}[ht]
\centering
\caption{Taxonomy of real-world POMDPs by factored observation structure. Factored settings preserve the hidden state under observation and navigation actions, while non-factored settings do not.}
\label{tab:taxonomy}
\begin{small}
\begin{tabular}{p{3.6cm}p{4.2cm}}
\toprule
\textbf{Factored} ($\Delta_T{=}0$) & \textbf{Non-factored} ($\Delta_T{\neq}0$) \\
\midrule
Non-destructive testing & Destructive testing (drilling) \\
Medical imaging (CT, MRI) & Biopsy / tissue sampling \\
Structural inspection & Active interventions \\
Environmental monitoring & Predator--prey (target moves) \\
Security screening & Chemical testing (consumes sample) \\
Mineral exploration & Quantum measurement \\
Mobile sensor networks & Adversarial surveillance \\
\bottomrule
\end{tabular}
\end{small}
\end{table}

\subsection{Agents}

We compare six agents (Table~\ref{tab:agents}), all sharing the same belief-update machinery and differing only in objective function and planning depth. The Planning and Planning+IG baselines use the same recursive tree search as the EFE agent, isolating the effect of the $\rho$ function from planning depth.

\begin{table}[ht]
\centering
\caption{Agent specifications. All use exact Bayesian belief updates over a known generative model.}
\label{tab:agents}
\begin{small}
\begin{tabular}{lccl}
\toprule
Agent & $\rho$ function & Horizon & Controls for \\
\midrule
Myopic & $\rho = 0$ & $H{=}1$ & Weakest baseline \\
Planning & $\rho = 0$ & $H > 1$ & Planning depth \\
Info Gain & $w \cdot I(b)$ & $H{=}1$ & Epistemic bonus (myopic) \\
Planning+IG & $w \cdot I(b)$ & $H > 1$ & IG + planning depth \\
EFE & $I_a(b)$ via EFE & $H > 1$ & Joint objective (Prop.~\ref{prop:equivalence}) \\
Epistemic-only & $I_a(b)$ only & $H > 1$ & Ablation: no pragmatic term \\
\bottomrule
\end{tabular}
\end{small}
\end{table}

Planning+IG is the critical baseline: it uses the same tree search as the EFE agent with an additive IG bonus at the same horizon. By Proposition~\ref{prop:equivalence}, the EFE agent is exactly Planning+IG with $w{=}1$, so any advantage is attributable to the weight choice rather than a different mechanism. The Epistemic-only agent sets $\mathcal{G}(\text{commit}) = 0$, removing reward awareness. It commits at chance on all environments, which confirms that the pragmatic term is essential. EFE computation is validated against \texttt{pymdp} \citep{heins2022} (Appendix~\ref{app:pymdp}).

For Info Gain and Planning+IG, $w$ is tuned per environment via grid search over $\{0.1, 0.5, 1, 2, 5, 10, 20, 50, 100\}$ on 200 tuning episodes. The Pareto analysis (Section~\ref{sec:pareto}) sweeps the full weight space.

\section{Experiments}

All environments are implemented as OpenAI Gymnasium environments following the observe-then-commit structure of Section~\ref{sec:methodology}. Main results use 1{,}000 episodes per seed across 5 random seeds ($\{42, 123, 456, 789, 1024\}$) for a total of 5{,}000 episodes. Results report the mean across all episodes. Statistical comparisons use $t$-tests with Holm--Bonferroni correction, with bootstrap CIs in Appendix~\ref{app:stats}. Comparison against POMCP (including compute-matched analysis with wall-clock timing at budgets 500--5{,}000 simulations) is in Appendix~\ref{app:pomcp}. Full specifications are in Appendix~\ref{app:envs}.

\paragraph{Tiger} \citep{kaelbling1998}. Two states, one observation action (listen, accuracy 0.85), two commit actions. Rewards: correct $+10$, incorrect $-100$, listen cost $-1$.

\paragraph{Sequential diagnosis.} $N{=}4$ conditions, $K{=}2$ binary tests (accuracy 0.80), $N$ diagnose actions. Correct $+10$, incorrect $-50$, test cost $-1$. The agent must choose which test to run.

\paragraph{Structured bandit.} $K{=}4$ arms, $K$ inspect actions (accuracy 0.80, cost $-0.5$), $K$ pull actions. Best arm $+10$, others $+1$. The agent must choose which arm to inspect.

\paragraph{Tileworld.} An $N{\times}N$ grid ($N{=}6$, $|S|{=}36$) with a hidden target tile. $K{=}6$ scan actions partition the grid via bit-level splits of row/column indices, each returning a noisy binary signal (accuracy $0.80$, cost $-1$). $N^2$ commit actions collect at a specific cell (correct $+10$, incorrect $-50$). A spatial generalization of diagnosis that produces visually interpretable belief evolution (Figure~\ref{fig:tw_comparison}).

\paragraph{RockSample} \citep{smith2004}. An $N{\times}N$ grid with $K$ rocks at known positions, each with hidden binary quality. Move actions change the agent's position. Check actions produce distance-dependent noisy observations of rock quality. Sampling collects the rock at the current position (good $+10$, bad $-10$), and exiting gives $+10$. All actions cost $-0.5$. We evaluate RS[5,3], RS[7,4], RS[7,8], and RS[11,11]. Unlike the above environments, RockSample has interleaved observe-act dynamics with state transitions, the setting addressed by Proposition~\ref{prop:factored}.

\paragraph{Structural inspection.} $N$ components at known spatial locations on a grid, each with a hidden binary state (nominal/faulty, prior $p_{\text{fault}}{=}0.3$). There are $K{=}2$ non-destructive test types: visual (accuracy $0.70$, cost $-0.5$) and detailed (accuracy $0.90$, cost $-2$). The agent navigates between components, runs tests, and declares a diagnosis for each. Correct nominal $+2$, correct fault $+5$, missed fault $-50$, false alarm $-5$, move cost $-0.5$. This is a factored observation POMDP (tests do not change fault states), mapping directly to industrial inspection, medical screening, and fault detection domains. We evaluate $N{=}8$ ($|S|{=}256$) and $N{=}16$ ($|S|{=}65{,}536$).

Two additional environments, a two-state testbed (Appendix~\ref{app:testbed}) and navigation (Appendix~\ref{app:navigation}), delimit the EFE agent's applicability on mild-penalty and small-state-space settings.

\section{Results}

\subsection{Core environments}

\begin{table}[ht]
\centering
\caption{Results across three core environments (5{,}000 total episodes: 1{,}000 per seed $\times$ 5 seeds). $w^*$: per-environment tuned weight. Reward shown as mean $\pm$ SE. Full agent set in Appendix~\ref{app:full_tables}. Effect sizes on reward (Cohen's $d$) are in Appendix~\ref{app:stats}.}
\label{tab:main}
\begin{small}
\begin{tabular}{llccc}
\toprule
Environment & Agent & Obs. & Success & Reward \\
\midrule
\multirow{4}{*}{\shortstack[l]{Tiger\\[-1pt]{\scriptsize $H{=}6,\;w^*{=}20$}}} & Myopic & $1.00$ & 84.6\% & $-7.98 \pm 0.56$ \\
& Planning & $4.28$ & 99.5\% & $+5.15 \pm 0.12$ \\
& Planning+IG & $4.20$ & 99.4\% & $+5.19 \pm 0.12$ \\
& \textbf{EFE} & $\mathbf{4.22}$ & $\mathbf{99.5\%}$ & $\mathbf{+5.23 \pm 0.11}$ \\
\midrule
\multirow{4}{*}{\shortstack[l]{Diagnosis\\[-1pt]{\scriptsize $H{=}3,\;w^*{=}100$}}} & Myopic & $2.00$ & 64.2\% & $-13.48 \pm 0.41$ \\
& Planning & $5.91$ & 89.2\% & $-2.37 \pm 0.26$ \\
& Planning+IG & $13.21$ & 99.3\% & $-3.63 \pm 0.10$ \\
& \textbf{EFE} & $\mathbf{9.73}$ & $\mathbf{97.1\%}$ & $\mathbf{-1.50 \pm 0.15}$ \\
\midrule
\multirow{4}{*}{\shortstack[l]{Bandit\\[-1pt]{\scriptsize $H{=}2,\;w^*{=}100$}}} & Myopic & $2.04$ & 61.7\% & $+5.53 \pm 0.06$ \\
& Planning & $3.24$ & 69.6\% & $+5.65 \pm 0.06$ \\
& Planning+IG & $12.41$ & 99.8\% & $+3.78 \pm 0.04$ \\
& \textbf{EFE} & $\mathbf{5.16}$ & $\mathbf{87.3\%}$ & $\mathbf{+6.27 \pm 0.05}$ \\
\bottomrule
\end{tabular}
\end{small}
\end{table}

On Tiger (single observation action), all multi-step agents achieve roughly $99.5\%$ success, and EFE matches tuned alternatives without weight selection. The Epistemic-only ablation commits at chance ($50.1\%$ on Tiger, $25.1\%$ on Bandit, $0.0\%$ on Tileworld $6{\times}6$, Appendix~\ref{app:full_tables}). Pure information gain without reward alignment produces catastrophic exploration, so the pragmatic term is essential.

On the multi-observation-action environments, EFE Pareto-dominates Planning. It achieves substantially higher success and comparable or better reward at the same time, without any tuning. On Diagnosis, EFE outperforms same-horizon Planning in success rate ($97.1\%$ vs.\ $89.2\%$, $+7.9$ pp) while achieving substantially better reward ($-1.50$ vs.\ $-2.37$). On Bandit, EFE achieves both higher success ($87.3\%$ vs.\ $69.6\%$, $+17.7$ pp) and the highest reward ($+6.27$ vs.\ $+5.65$). Bootstrap 95\% CIs (10{,}000 resamples over 5{,}000 episodes) confirm non-overlapping reward intervals: on Bandit, EFE $+6.38$ $[+6.29, +6.47]$ vs.\ Planning $+5.65$ $[+5.54, +5.76]$, and on Diagnosis, EFE $-1.58$ $[-1.89, -1.29]$ vs.\ Planning $-2.72$ $[-3.26, -2.19]$. Planning+IG at tuned weights ($w{=}100$) reaches near-ceiling success ($99.3\%$ on Diagnosis, $99.8\%$ on Bandit) but at substantial reward cost due to over-exploration: 13.21 tests on Diagnosis ($-3.63$ reward) and 12.41 inspections on Bandit ($+3.78$ reward). Viewed in the success--reward plane (Figure~\ref{fig:pareto}), the EFE agent sits at the Pareto knee on every environment, while Planning sacrifices success rate and Planning+IG sacrifices reward. The pattern is consistent: EFE's joint pragmatic--epistemic objective knows when to stop exploring, while additive IG bonuses with tuned weights do not.

Effect sizes clarify where the differences lie. Cohen's $d$ on reward between EFE and same-horizon Planning is negligible on Tiger, Diagnosis, and Bandit ($|d| < 0.2$, Table~\ref{tab:effect_sizes}), because both agents already achieve reasonable returns once they explore enough. The key gap is whether they explore the right observations. On success rate, where that gap appears, $d$ is substantially larger: $d \approx 0.33$ on Diagnosis and $d \approx 0.47$ on Bandit (Appendix~\ref{app:stats}), in the small-to-medium range. This two-axis pattern is exactly the Pareto story: EFE moves the under-served objective (success) without sacrificing reward. Medium-to-large reward $d$ values ($d > 0.7$) emerge against over-exploring baselines and on environments where Planning fails to explore sufficiently (Tileworld $d > 2.0$).

\subsection{Pareto analysis: the canonical $w{=}1$}
\label{sec:pareto}

By Proposition~\ref{prop:equivalence}, EFE is exactly Planning+IG with $w{=}1$. We sweep $w$ from $0.01$ to $200$ on all environments (Figure~\ref{fig:pareto}).

\begin{figure}[t]
\centering
\includegraphics[width=\linewidth]{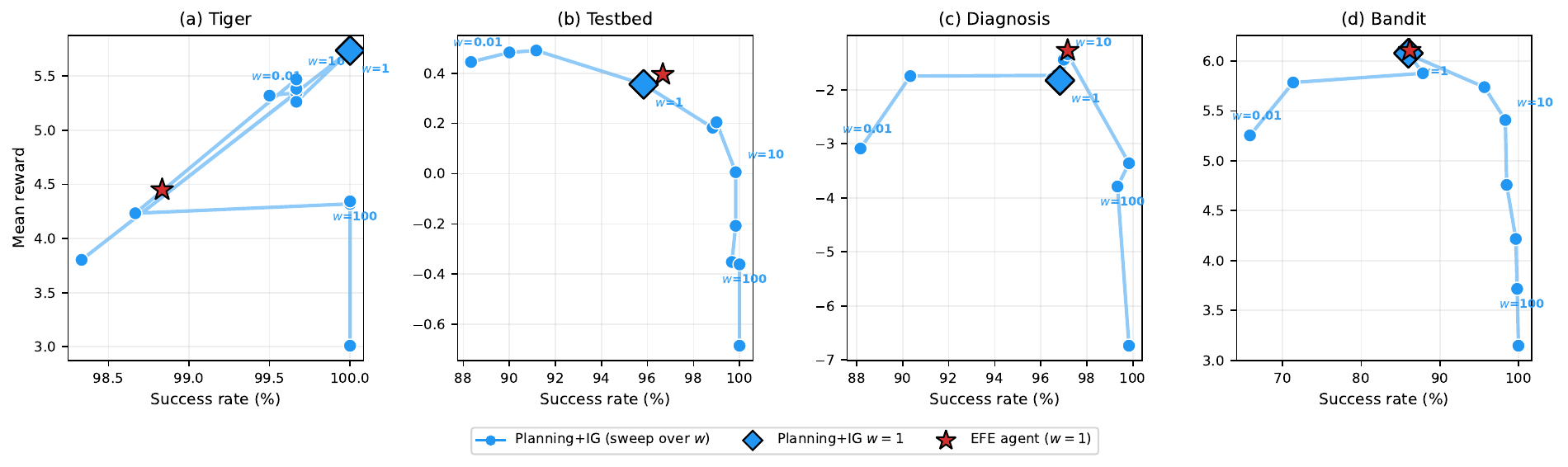}
\caption{Pareto analysis: success rate vs.\ mean reward as $w$ varies from 0.01 to 200. Diamond: $w{=}1$ (canonical EFE weight). Star: EFE agent. On all environments, $w{=}1$ sits at or near the Pareto knee, the inflection point where further weight increases buy marginal accuracy at substantial reward cost.}
\label{fig:pareto}
\end{figure}

On every environment, $w{=}1$ sits near the reward-maximizing weight $w^*_{\text{ret}}$, while the success-maximizing weight $w^*_{\text{succ}}$ lies at $20$--$200$. We emphasize that $w{=}1$ is near-optimal for reward, not for success rate: agents requiring near-certain accuracy (e.g., safety-critical applications) would benefit from higher weights at the cost of reduced reward. The contribution of EFE is deriving a principled weight from the variational bound that is near-optimal for reward maximization, rather than a grid search whose optimum changes by orders of magnitude across tasks.

\subsection{Tileworld: spatial epistemic foraging}
\label{sec:tileworld}

To test whether EFE's advantage extends to larger state spaces, we introduce a spatial generalization: the Tileworld projects the Diagnosis partition structure onto an $N{\times}N$ grid ($|S|{=}N^2$), producing spatially interpretable belief evolution (Figure~\ref{fig:tw_comparison}, with step-by-step belief strips in Appendix~\ref{app:tw_belief}).

\begin{table}[ht]
\centering
\caption{Tileworld $6{\times}6$ (2{,}500 episodes: 500 per seed $\times$ 5 seeds, $H{=}2$). Tuned weight $w^*{=}100$.}
\label{tab:tileworld}
\begin{small}
\begin{tabular}{lccc}
\toprule
Agent & Scans & Success & Reward \\
\midrule
Myopic & $0.00$ & 2.7\% & $-48.39$ \\
Planning ($H{=}2$) & $15.68$ & 73.7\% & $-21.47$ \\
Planning+IG ($w{=}100$) & $33.38$ & 98.4\% & $-24.31$ \\
\textbf{EFE} ($H{=}2$) & $\mathbf{14.81}$ & $\mathbf{72.8\%}$ & $\mathbf{-21.13}$ \\
\bottomrule
\end{tabular}
\end{small}
\end{table}

EFE achieves the highest reward ($-21.13$), scanning 14.81 times, less than half of Planning+IG's 33.38 scans, which erode reward despite reaching 98.4\% success. The same over-exploration pattern from Diagnosis and Bandit recurs at this larger scale. Figure~\ref{fig:tw_comparison} visualizes the mechanism: EFE concentrates belief efficiently via partition-based narrowing and commits once confident, while Planning lacks scan-selection guidance and Info Gain continues scanning past the point of diminishing returns.

\begin{figure}[t]
\centering
\includegraphics[width=\linewidth]{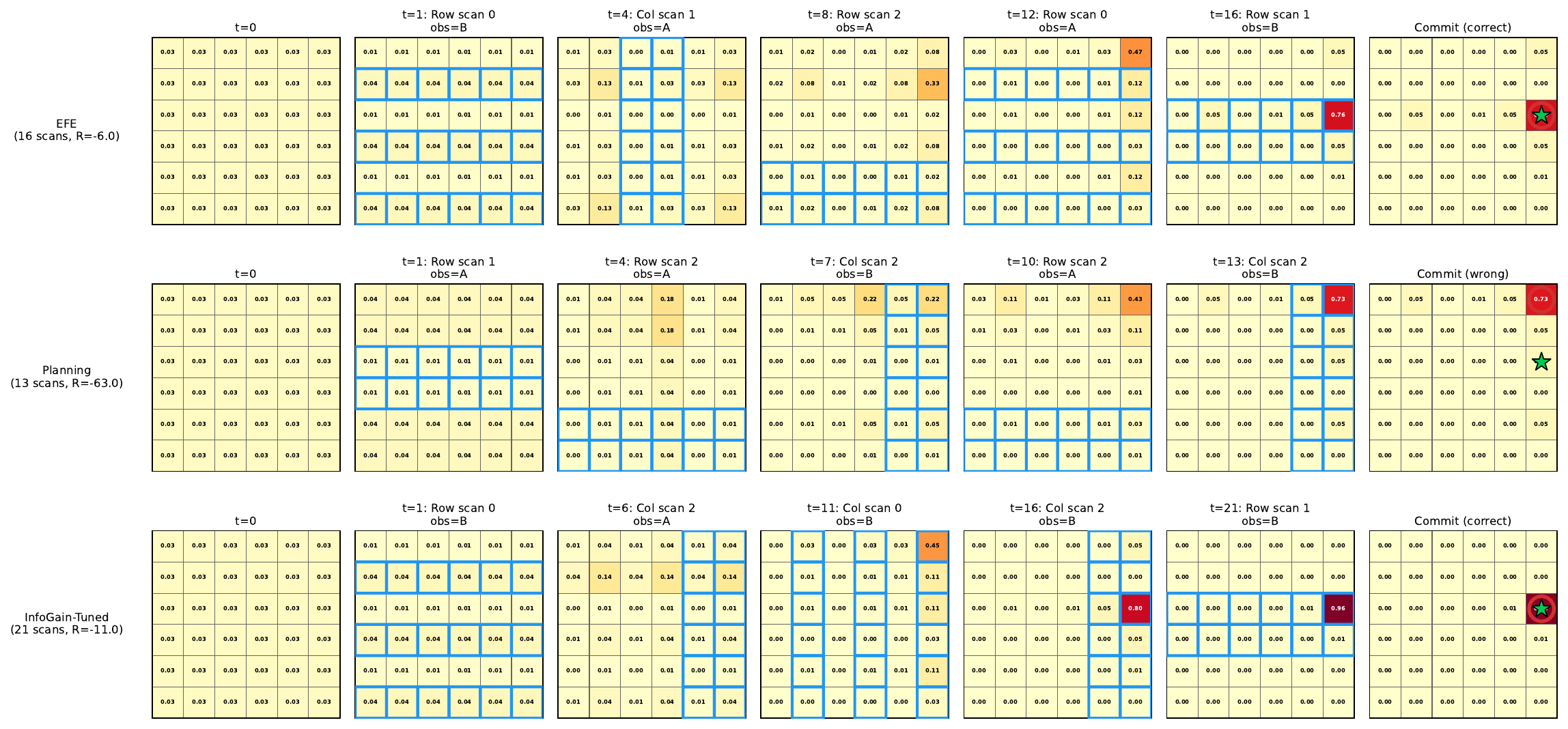}
\caption{Agent comparison on the same $6{\times}6$ Tileworld episode. EFE (top) commits correctly with efficient scanning. Planning (middle) explores with less direction before committing. Info Gain (bottom) over-scans well past the point of diminishing returns. Red circle: committed cell. Green star: target.}
\label{fig:tw_comparison}
\end{figure}

\paragraph{Spatial scaling.} We scale the grid from $4{\times}4$ to $8{\times}8$ with all agents (Figure~\ref{fig:tw_scaling}). At $8{\times}8$ ($|S|{=}64$), reward-only Planning collapses to $2.5\%$ success while EFE maintains $66.5\%$. Planning+IG at $w{=}100$ achieves $98.0\%$ success but at substantial reward cost due to extensive scanning. This reveals a nuanced picture: EFE ($w{=}1$) achieves the best reward at every scale and matches or exceeds Planning in success, but tuned Planning+IG ($w{=}100$) achieves higher success rates at larger grids by exploring more. The reward-maximizing weight $w^*_{\text{ret}}$ does not shift with $|S|$ (EFE's reward remains highest), but its gap to the success-maximizing weight $w^*_{\text{succ}}$ widens. Safety-critical applications at large scale would therefore benefit from higher weights.

\begin{figure}[t]
\centering
\includegraphics[width=\linewidth]{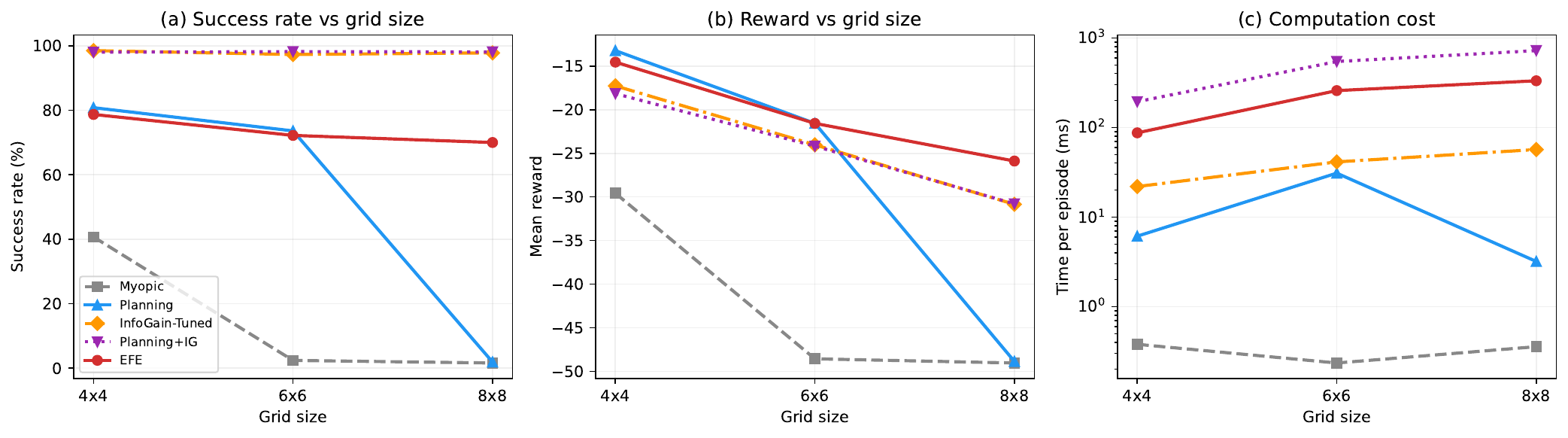}
\caption{Tileworld scaling ($H{=}2$, 200 episodes per seed $\times$ 5 seeds) across all agents. Planning collapses at $8{\times}8$ ($|S|{=}64$), while EFE maintains $66.5\%$. Tuned Planning+IG ($w{=}100$) achieves $98.0\%$ but at higher reward cost.}
\label{fig:tw_scaling}
\end{figure}

\paragraph{Observation structure sensitivity.} A natural concern is whether EFE's advantage on Tileworld depends on the highly structured bit-level partition of scan actions. We test this by replacing the deterministic row/column splits with two alternative observation structures: random partitions (each scan randomly assigns cells to two groups, breaking orthogonality) and overlapping partitions (scans use random linear combinations of coordinates, producing correlated and partially redundant observations). On $6{\times}6$ Tileworld ($H{=}2$, 200 episodes $\times$ 5 seeds), EFE achieves the best reward under all three modes: bitwise $-20.52$ (74.2\% success), random $-29.15$ (56.1\%), overlapping $-46.73$ (9.8\%). Planning follows the same pattern: $-21.44$ (73.7\%), $-30.41$ (54.5\%), $-47.61$ (8.5\%). EFE's reward advantage over Planning is consistent across modes ($+0.9$, $+1.3$, $+0.9$), confirming that the result is not an artifact of the structured observation model. As expected, random and overlapping modes reduce absolute performance because scans provide less complementary information, but the relative ranking of agents is preserved.

\subsection{Interleaved observe-act: RockSample}
\label{sec:rocksample}

To validate Proposition~\ref{prop:factored} beyond observe-then-commit settings, we evaluate on RockSample \citep{smith2004}, a standard POMDP benchmark with interleaved observe-act dynamics. An agent navigates an $N{\times}N$ grid containing $K$ rocks at known positions, each with hidden binary quality. Actions include move (N/S/E/W, cost $-0.5$), check rock $k$ (noisy observation, accuracy decays with distance), sample (collect rock at current position: good $+10$, bad $-10$), and exit ($+10$). All agents use depth-limited belief-space tree search over factored beliefs (independent per-rock), differing only in the information gain weight $w$.

\begin{table}[ht]
\centering
\caption{RockSample results with tree-search agents (500 episodes $\times$ 5 seeds for RS[5,3]--[7,8] and 50 $\times$ 2 seeds for RS[11,11]). EFE ($w{=}1$) achieves the highest or near-highest reward on all instances while avoiding bad rocks, confirming Proposition~\ref{prop:factored}. Steps and Checks are omitted because tree-search agents evaluate all actions at each node. Per-step attribution is in Appendix~\ref{app:rocksample}.}
\label{tab:rocksample}
\begin{small}
\begin{tabular}{llccccc}
\toprule
Instance & Agent & Steps & Checks & Good & Bad & Reward \\
\midrule
\multirow{4}{*}{RS[5,3]} & Greedy & -- & -- & $1.49$ & $1.51$ & $+3.71$ \\
& Planning ($w{=}0$) & -- & -- & $1.06$ & $0.08$ & $+14.00$ \\
& Planning+IG ($w{=}5$) & -- & -- & $1.34$ & $0.02$ & $+15.67$ \\
& \textbf{EFE} ($w{=}1$) & -- & -- & $\mathbf{1.34}$ & $\mathbf{0.02}$ & $\mathbf{+15.67}$ \\
\midrule
\multirow{4}{*}{RS[7,4]} & Greedy & -- & -- & $1.98$ & $2.02$ & $-0.88$ \\
& Planning ($w{=}0$) & -- & -- & $0.98$ & $0.04$ & $+14.39$ \\
& Planning+IG ($w{=}5$) & -- & -- & $1.50$ & $0.02$ & $+13.47$ \\
& \textbf{EFE} ($w{=}1$) & -- & -- & $\mathbf{1.38}$ & $\mathbf{0.00}$ & $\mathbf{+15.47}$ \\
\midrule
\multirow{4}{*}{RS[7,8]} & Greedy & -- & -- & $3.50$ & $4.50$ & $-7.09$ \\
& Planning ($w{=}0$) & -- & -- & $0.45$ & $0.05$ & $+11.75$ \\
& Planning+IG ($w{=}5$) & -- & -- & $2.55$ & $0.05$ & $+20.82$ \\
& \textbf{EFE} ($w{=}1$) & -- & -- & $\mathbf{2.15}$ & $\mathbf{0.05}$ & $\mathbf{+19.57}$ \\
\midrule
\multirow{4}{*}{\shortstack[l]{RS[11,11]\\[-1pt]{\scriptsize $|S|{=}2048$}}} & Greedy & -- & -- & $5.33$ & $5.67$ & $-21.90$ \\
& Planning ($w{=}0$) & -- & -- & $0.50$ & $0.01$ & $+13.64$ \\
& Planning+IG ($w{=}5$) & -- & -- & $0.53$ & $0.01$ & $+13.66$ \\
& \textbf{EFE} ($w{=}1$) & -- & -- & $\mathbf{0.50}$ & $\mathbf{0.01}$ & $\mathbf{+13.64}$ \\
\bottomrule
\end{tabular}
\end{small}
\end{table}

On RS[5,3] and RS[7,4], EFE ($w{=}1$) achieves the highest reward. On RS[7,8], EFE ($w{=}1$) scores $+19.57$ vs.\ Planning's $+11.75$ ($+7.82$ gap), showing that the advantage grows with the number of observation actions, consistent with the observe-then-commit findings. On all instances up to RS[7,8], EFE samples dramatically fewer bad rocks than Greedy (0.00--0.05 vs.\ 1.51--5.67), confirming that the information gain term drives checking behavior.

On RS[11,11] ($|S|{=}2{,}048$), EFE and Planning achieve identical reward ($+13.64$), with all information-aware agents vastly outperforming Greedy ($+13.6$ vs.\ $-21.9$). This result demonstrates tractability, since the factored belief tree search handles 2{,}048 states in seconds, but it does not differentiate EFE from reward-only planning. At depth 2, the tree search horizon is too shallow relative to the 11-rock environment: all informed agents converge to a conservative ``check nearest rock, sample if good, exit'' strategy. Depth 3 is computationally intractable at this scale, requiring orders of magnitude more search time per step. This mirrors the Tileworld scaling finding (Figure~\ref{fig:tw_scaling}), where EFE's advantage requires sufficient depth relative to the state space. The RS[7,8] result, where EFE achieves a $+7.82$ reward gap over Planning, shows that EFE differentiation emerges when the observation action count ($K{=}8$) provides sufficient room for directed information gathering within the search horizon. Detailed results including POMCP baselines are in Appendix~\ref{app:rocksample}.

\subsection{Structural inspection}
\label{sec:inspection}

To validate Proposition~\ref{prop:factored} on a domain-realistic benchmark and demonstrate scalability beyond existing environments, we implement a Structural Inspection POMDP mapping directly to industrial fault detection, medical screening, and non-destructive testing domains. The agent inspects $N$ components arranged spatially, each with a hidden binary state (nominal/faulty). Two non-destructive test types provide accuracy--cost trade-offs: a visual check (accuracy 0.70, cost 0.5) and a detailed test (accuracy 0.90, cost 2.0). The agent navigates between components, selects tests, and declares diagnoses with asymmetric penalties (missed fault $-50$, false alarm $-5$). Tests do not alter component states, so this is a factored observation POMDP and Proposition~\ref{prop:factored} applies.

\begin{table}[ht]
\centering
\caption{Structural Inspection results. $N{=}8$ uses 2{,}500 episodes (500 per seed $\times$ 5 seeds) and $N{=}16$ uses 1{,}000 episodes (200 per seed $\times$ 5 seeds). Reward shown as mean $\pm$ SE. EFE ($w{=}1$) achieves the best reward--accuracy trade-off. Accuracy: fraction of components correctly diagnosed.}
\label{tab:inspection}
\begin{small}
\begin{tabular}{llccccc}
\toprule
Instance & Agent & Accuracy & Missed & Tests & Reward \\
\midrule
\multirow{4}{*}{\shortstack[l]{$N{=}8$\\[-1pt]{\scriptsize $|S|{=}256$}}} & Greedy & 70.7\% & $2.34$ & $0.0$ & $-114.33 \pm 1.33$ \\
& Planning ($w{=}0$) & 73.0\% & $0.07$ & $12.7$ & $-17.85 \pm 0.34$ \\
& Planning+IG ($w{=}5$) & 94.9\% & $0.10$ & $18.7$ & $-22.82 \pm 0.35$ \\
& \textbf{EFE} ($w{=}1$) & $\mathbf{87.9\%}$ & $\mathbf{0.08}$ & $18.0$ & $\mathbf{-20.60 \pm 0.34}$ \\
\midrule
\multirow{4}{*}{\shortstack[l]{$N{=}16$\\[-1pt]{\scriptsize $|S|{=}65{,}536$}}} & Greedy & 70.0\% & $4.81$ & $0.0$ & $-237.46 \pm 2.93$ \\
& Planning ($w{=}0$) & 78.2\% & $0.26$ & $22.3$ & $-46.09 \pm 0.94$ \\
& Planning+IG ($w{=}5$) & 91.4\% & $0.24$ & $28.1$ & $-43.51 \pm 0.85$ \\
& \textbf{EFE} ($w{=}1$) & $\mathbf{86.1\%}$ & $\mathbf{0.27}$ & $32.8$ & $\mathbf{-45.71 \pm 0.94}$ \\
\bottomrule
\end{tabular}
\end{small}
\end{table}

EFE ($w{=}1$) achieves the best reward--accuracy trade-off on both instances. On $N{=}8$ ($|S|{=}256$), EFE achieves $87.9\%$ accuracy with reward $-20.60 \pm 0.34$, gaining $+14.9$pp accuracy over Planning ($73.0\%$, $-17.85$) at a moderate reward cost. Planning+IG ($w{=}5$) achieves $94.9\%$ accuracy but at $-22.82$ reward: it over-tests, spending resources on visual checks where a single detailed test suffices. On $N{=}16$ ($|S|{=}65{,}536$), EFE outperforms Planning by $+7.9$pp accuracy ($86.1\%$ vs.\ $78.2\%$) with comparable reward ($-45.71 \pm 0.94$ vs.\ $-46.09 \pm 0.94$, $p > 0.05$). This is the largest state space in our evaluation and confirms that the factored belief tree search scales to realistic domains. The asymmetric penalty structure ($\alpha = 25$) places this firmly in the regime where Proposition~\ref{prop:nearopt} predicts $w{=}1$ is near-optimal.

\subsection{Summary}

The results reveal a consistent pattern across observe-then-commit, interleaved, and domain-realistic settings. EFE matches Planning+IG at $w{=}1$, confirming Propositions~\ref{prop:equivalence} and~\ref{prop:factored}. The canonical weight sits at the Pareto knee without per-environment search. On Diagnosis and Bandit, EFE Pareto-dominates same-horizon planning with higher success and better reward. On Tileworld, RockSample, and Inspection, it matches or improves upon Planning in reward at comparable or higher accuracy. The advantage grows with state space size and observation action count, scaling to $|S|{=}65{,}536$ on Inspection ($N{=}16$), and it carries over to interleaved observe-act POMDPs with state transitions (RockSample, Inspection).

\section{Discussion}
\label{sec:discussion}

\paragraph{When does EFE-as-$\rho$ help?}
EFE's advantage requires two conditions: (1)~multiple observation actions with differential informativeness, and (2)~sufficient planning horizon for recursive EFE to propagate epistemic value. When condition (1) fails, as on Tiger with its single listen action, reward-only planning matches EFE. The Tileworld adds a third axis: as the state space grows, the reward signal becomes too diffuse to guide scan selection, and EFE's advantage increases (Figure~\ref{fig:tw_scaling}). Within-episode dynamics (Appendix~\ref{app:foraging}) show the mechanism: EFE concentrates belief via partition-based narrowing and commits once the value of committing exceeds the value of observing, a crossover that emerges automatically from $w{=}1$ without tuning (Figure~\ref{fig:traj}, Appendix~\ref{app:efe_trajectory}).

\paragraph{The canonical weight and why it works.}
EFE does not eliminate the exploration--exploitation weight. It derives one from the variational bound. The Pareto analysis (Figure~\ref{fig:pareto}) shows this weight sits near the reward-maximizing $w^*_{\text{ret}}$ on every environment, while $w^*_{\text{succ}}$ lies at $20$--$200$. The gap between $w^*_{\text{ret}}$ and $w^*_{\text{succ}}$ has practical implications: $w{=}1$ optimizes expected reward, not success probability. In safety-critical settings where near-certain accuracy is required, a higher weight may be appropriate, though such weights must still be tuned per-environment, and EFE provides a principled starting point. Increasing $w$ beyond 1 buys marginal accuracy at substantial reward cost. Moreover, adding planning depth to a weighted IG bonus amplifies the weight's effect ($w{=}100$ on Bandit: Planning+IG takes 12.41 inspections vs.\ myopic Info Gain's 10.99), making the tuning problem harder with depth. EFE avoids this because both terms arise from the same variational bound and share a common scale in nats. The canonical weight is $w{=}1$ precisely because information gain is measured in the same units (nats) as the KL divergence in the variational bound. This is a consequence of the mathematical structure rather than an empirical coincidence. In bits (base 2), the corresponding weight would be $w = 1/\ln 2 \approx 1.44$. The Pareto analysis shows that the reward-optimal weight is insensitive to this factor: the knee is broad, and $w \in [0.5, 2.0]$ yields near-identical reward on all environments, making the nat/bit distinction practically irrelevant.

The canonical weight's advantage requires $\gamma \geq 0.99$ on multi-observation environments. Our discount-sensitivity analysis (Appendix~\ref{app:discount}) reveals a sharp transition: on Diagnosis ($N{=}4$), EFE's success advantage over Planning is $+7.8$pp at $\gamma{=}1.0$ but drops to $+1.0$pp at $\gamma{=}0.95$ and reverses at $\gamma{=}0.90$ ($-1.6$pp). The mechanism is that discounting truncates the effective planning horizon below the number of observations needed for confident diagnosis, erasing EFE's advantage. On Tiger (single observation), EFE is insensitive to $\gamma$ across the entire range $[0.90, 1.0]$. This is a meaningful practical limitation: applications with high time pressure or non-stationary environments that mandate heavy discounting will not benefit from EFE's epistemic drive.

\paragraph{Zero-shot weight transfer.}
The tuning problem is not merely inconvenient. It transfers catastrophically. We evaluate each environment's success-maximizing weight $w^*_{\text{succ}}$ on all other environments (Table~\ref{tab:transfer}). On Tiger, transferring $w^*{=}100$ from Diagnosis drops reward from $+5.02$ (EFE) to $+4.13$, as the agent over-explores. On the Testbed, transferring $w^*{=}20$ from Tiger drops reward from $+0.39$ (EFE) to $-0.16$. On Tiger alone, the native $w^*{=}20$ achieves $+5.42$ (slightly above EFE's $+5.02$), but this weight transfers poorly to all other environments. Across all 16 environment--weight pairs, EFE ($w{=}1$) achieves the best or near-best reward on every environment without tuning, while every transferred weight underperforms EFE on at least one target. The success-maximizing weight varies by $5\times$ across environments ($w^*{=}20$ for Tiger vs.\ $w^*{=}100$ for Bandit), making zero-shot deployment with a tuned weight unreliable. EFE sidesteps this entirely: its weight is derived, not tuned, and transfers robustly.

\begin{table}[ht]
\centering
\caption{Zero-shot weight transfer: mean reward achieved by each weight on each environment. EFE ($w{=}1$) achieves the best or near-best reward on all four targets. Each success-tuned $w^*$ underperforms EFE on other environments. Bold: best reward per environment.}
\label{tab:transfer}
\begin{small}
\begin{tabular}{lcccc}
\toprule
Weight & Tiger & Diagnosis & Bandit & Testbed \\
\midrule
$w{=}1$ (EFE) & $+5.02$ & $\mathbf{-0.99}$ & $\mathbf{+6.38}$ & $\mathbf{+0.39}$ \\
$w{=}20$ (Tiger $w^*$) & $\mathbf{+5.42}$ & $-1.75$ & $+4.98$ & $-0.16$ \\
$w{=}50$ (Testbed $w^*$) & $+4.15$ & $-3.75$ & $+4.42$ & $-0.37$ \\
$w{=}100$ (Diag./Band.\ $w^*$) & $+4.13$ & $-3.57$ & $+3.65$ & $-0.41$ \\
\bottomrule
\end{tabular}
\end{small}
\end{table}

\paragraph{When is $w{=}1$ near-optimal?}
Proposition~\ref{prop:nearopt} formalizes the conditions for $H{=}1$, two states: the observation threshold $w^*_{\text{thresh}}$ depends on the reward asymmetry ratio $\alpha = |R^-|/R^+$, the observation informativeness $\eta = I_{\max}/c$, and the observation accuracy $p$. Table~\ref{tab:alpha_eta} validates this across our environments. The key predictor is $\alpha$: environments with high penalty asymmetry ($\alpha \geq 5$: Tiger, Diagnosis, Tileworld) have $w^*_{\text{thresh}} \ll 0$, meaning $w{=}1$ is far above the threshold and near-optimal.

To assess whether the $H{=}1$ analysis extends to multi-step planning, we conducted a Monte Carlo study over 100 randomly generated two-state environments (sampling $\alpha \in [1, 50]$, $p \in [0.55, 0.95]$, cost $\in [0.1, 5]$). For each environment and $H \in \{1, 2, 3\}$, we compute the reward-optimal $w^*$ by grid search and classify $w{=}1$ as near-optimal when the reward gap from $w^*$ is within $\max(5\%, 0.5)$ of the best reward (Figure~\ref{fig:nearopt_horizon}). The near-optimality rate increases from 9\% at $H{=}1$ to 22\% at $H{=}2$ and 32\% at $H{=}3$, confirming that multi-step planning amplifies the value of observation and expands the region where $w{=}1$ is near-optimal. For high-asymmetry environments ($\alpha \geq 10$), the improvement is sharper: 11\% $\to$ 24\% $\to$ 32\%. This is consistent with the Bandit result, where $\alpha{=}1.1$ and the $H{=}1$ analysis predicts marginal sufficiency, yet multi-step EFE at $H{=}2$ achieves $87.3\%$ success (Table~\ref{tab:main}).

Many real-world decision problems have $\alpha \gg 1$: in medical diagnosis, fault detection, and security screening, a missed condition costs far more than another test. This is exactly the regime where $w{=}1$ is well calibrated, and the multi-step analysis shows the advantage grows with planning depth. The practical consequence is that in these domains the exploration weight can be deployed as-is, with no per-task search.

\paragraph{Robustness to model misspecification.}
Our main results assume exact knowledge of the generative model. Appendix~\ref{app:misspec} investigates robustness when the agent's believed observation accuracy differs from the true value by up to $\pm 0.15$. On Tiger, success rates remain above 96.7\% across all mismatch levels. On Diagnosis, degradation is larger but graceful. Overestimating sensor accuracy (positive mismatch) is more harmful than underestimating it, because the agent commits prematurely on insufficient evidence. EFE's intrinsic epistemic drive provides a partial buffer: even with a miscalibrated model, the information gain term still encourages observation, compensating for overconfident planning.

\paragraph{Approximate planning with MCTS-EFE.}
The exact tree search cost $\mathcal{O}(K \cdot |\mathcal{O}|^H)$ limits practical horizons to $H{=}2$--$3$ with $K \geq 3$. To push beyond this, we implement MCTS with EFE as a leaf heuristic: the tree policy uses UCB1 with observation-outcome enumeration, and leaf nodes are evaluated via greedy EFE rollouts. On Tiger at $H{=}10$, MCTS-EFE with 500 simulations achieves 97.2\% success (7.2s per 200 episodes), compared to 89.7\% for POMCP at matched budget (1.7s). This demonstrates that EFE's closed-form information valuation provides a substantially stronger leaf heuristic than semi-informed rollouts. The comparison is designed to be fair to POMCP: our implementation already uses belief-optimal commits (not purely random rollouts): observations are sampled uniformly, but the rollout terminates with the commit action maximizing expected reward under the Bayesian-updated belief (Appendix~\ref{app:pomcp}). The remaining performance gap therefore reflects the value of directed observation selection: EFE chooses which test to run based on information gain, while POMCP samples tests uniformly. The claim is not that EFE-based planning exceeds state-of-the-art POMDP solvers, but that the EFE leaf heuristic provides a principled, zero-tuning alternative to heuristic rollout design. POMCP with fully informed rollouts (e.g., using domain knowledge or a learned value function for observation selection) would narrow the gap further.

The approach also scales to multiple observation actions. For single-observation environments, the observation-outcome enumeration (2 outcomes per action) keeps expansion costs low. On Diagnosis ($N{=}4$, $K{=}2$ tests), MCTS-EFE at $H{=}5$ achieves $98.0\%$ success compared to POMCP's $71.3\%$ at matched budget (200 simulations), demonstrating that EFE scales to multi-observation environments. To validate on a larger multi-observation environment, we evaluate on Tileworld $6{\times}6$ ($|S|{=}36$, $K{=}6$ scans). MCTS-EFE(50) achieves $96.0\%$ success and $-19.04$ reward, outperforming both Exact-EFE (75.0\%, $-20.11$) and POMCP at matched (50) or higher (200) simulation budgets ($2.0\%$/$15.0\%$ success). The improvement over exact tree search reflects MCTS's ability to concentrate samples on promising observation sequences. The $81$--$94$pp gap over POMCP confirms that EFE's directed observation selection is essential at this scale: POMCP's semi-informed rollouts cannot identify which of the 6 scans to perform. For larger observation spaces, more efficient tree policies, such as progressive widening or double progressive widening \citep{benchetrit2025}, would further improve scalability. We leave this to future work.

\paragraph{When to use EFE-as-$\rho$: summary.}
Our analysis identifies the following conditions favoring EFE over tuned alternatives:
\begin{itemize}[leftmargin=1.5em,itemsep=1pt,topsep=2pt]
\item \textbf{Reward asymmetry $\alpha \geq 5$}: the penalty for a wrong commit far exceeds the observation cost, making $w{=}1$ automatically near-optimal (Proposition~\ref{prop:nearopt}).
\item \textbf{Multiple observation actions}: EFE's joint pragmatic--epistemic objective selects which information to gather, producing gains over reward-only planning even at matched horizon.
\item \textbf{Moderate to large state spaces ($|S| \geq 16$)}: as the reward signal diffuses, directed information gathering becomes essential, and EFE's advantage grows with $|S|$ (Figure~\ref{fig:tw_scaling}), scaling to $|S|{=}65{,}536$ on Structural Inspection.
\item \textbf{Interleaved observe-act with preserved hidden state}: when the hidden state does not change under observation and navigation actions (factored observation POMDPs, Table~\ref{tab:taxonomy}), the canonical-weight equivalence holds and EFE directs both where to go and what to check (Tables~\ref{tab:rocksample},~\ref{tab:inspection}).
\item \textbf{Cross-environment deployment}: when the weight cannot be tuned per-task, $w{=}1$ transfers robustly while tuned weights fail catastrophically (Table~\ref{tab:transfer}).
\item \textbf{Planning horizon $H \geq 2$}: recursive EFE propagates epistemic value across steps. At $H{=}1$, it reduces to myopic IG with $w{=}1$.
\end{itemize}
EFE is not recommended when $\alpha \approx 1$ (symmetric penalties), in navigation-style POMDPs where observations are tied to translation and a greedy mover already receives informative feedback (Table~\ref{tab:nav_scaling}), or when model misspecification exceeds $\pm 0.15$ in observation accuracy.

\paragraph{Limitations and future work.}
The formal equivalence extends from observe-then-commit (Proposition~\ref{prop:equivalence}) to factored observation POMDPs (Proposition~\ref{prop:factored}), covering settings where observation actions preserve the hidden state. POMDPs where information-gathering actions change the hidden state (e.g., destructive testing) require the full coupling term $\Delta_T$ and are not covered. Extending the formal treatment to such settings remains future work. For slowly drifting hidden states (e.g., progressive disease), $\Delta_T$ is small but nonzero. The approximation error scales with the per-step transition entropy $H(s'_{\mathrm{hid}} | s_{\mathrm{hid}}, a)$, and characterizing the regime where this remains acceptable is an open question.

On the practical side, scaling MCTS-EFE to multi-observation environments requires more efficient tree policies, as discussed above. The EFE agent assumes a known generative model. The misspecification analysis (Appendix~\ref{app:misspec}) shows graceful degradation with moderate model errors, but extending to full model learning (where AIF and BAMDPs converge) remains important future work. Navigation (Appendix~\ref{app:navigation}) shows that scale alone does not rescue epistemic planning when the observation model is proximity-based: NavMyopic leads at $3{\times}3$, $5{\times}5$, and $7{\times}7$. The practical contrast is with domains that offer explicit, choice-set observation actions (Diagnosis, Tileworld, RockSample), where $|S| \geq 16$ marks the regime in which EFE separates from reward-only planning.

\paragraph{Broader impact.}
This work is primarily theoretical. The principled derivation of exploration weights could benefit safety-critical decision-making by reducing reliance on ad-hoc tuning.

\section{Conclusion}

The $\rho$-POMDP framework and active inference converge on the same mathematical object: a belief-dependent utility that adds information gain at weight $w{=}1$ to the reward signal. EFE does not eliminate the exploration--exploitation trade-off. Instead, it replaces ad hoc bonus weights with a coefficient fixed by the variational geometry (nats), which then doubles as the Planning+IG weight in Proposition~\ref{prop:equivalence}. The Pareto analysis shows this canonical weight is near-optimal for expected reward across all tested test-selection environments, sitting at the knee where further increases buy marginal accuracy at substantial reward cost. Safety-critical applications requiring near-certain accuracy may benefit from higher weights. A Monte Carlo study over randomly generated environments (Appendix~\ref{app:nearopt_horizon}) confirms that the near-optimality basin widens with planning horizon, extending the $H{=}1$ analysis of Proposition~\ref{prop:nearopt} to multi-step settings.

Proposition~\ref{prop:factored} extends the formal equivalence beyond observe-then-commit to factored observation POMDPs, validated on RockSample (up to $|S|{=}2{,}048$) and a new Structural Inspection benchmark ($|S|$ up to $65{,}536$) mapping directly to industrial fault detection and medical screening domains. On Inspection ($N{=}16$), EFE achieves $86.1\%$ diagnostic accuracy where planning scores $78.2\%$, at comparable reward ($p > 0.05$). Comparison against POMCP with semi-informed rollouts (random observations, belief-optimal commits, Appendix~\ref{app:pomcp}) shows that EFE's advantage comes from directed observation selection, not from using a stronger solver. Domain-informed POMCP rollouts would narrow this gap, but they require per-environment engineering that EFE avoids. MCTS-EFE on Tiger achieves 97.2\% success at $H{=}10$ with 500 simulations, outperforming POMCP (89.7\%) at matched compute. The advantage is most pronounced in multi-observation-action settings: on the $8{\times}8$ Tileworld, EFE achieves $66.5\%$ success where reward-only planning collapses to $2.5\%$. The message for practitioners is simple: wherever an agent must pay for its observations and answer for its mistakes, the exploration weight it needs is not a hyperparameter to search over, because active inference already supplies it.

\begin{credits}
\subsubsection{\discintname}
The authors have no competing interests to declare that are relevant to the content of this article.
\end{credits}


\appendix

\section{Proof of Proposition~\ref{prop:equivalence}}
\label{app:proof}

\begin{proof}
We show that $\arg\min_a \mathcal{G}(a, b, d) = \arg\max_a V_\rho(a, b, d)$ at every belief node, where $V_\rho$ is the $\rho$-POMDP value function with $\rho_{\mathrm{EFE}}(b, a)$ as defined in the proposition.

Define $V(a, b, d) \triangleq -\mathcal{G}(a, b, d)$. Since negation reverses ordering, $\arg\min_a \mathcal{G} = \arg\max_a V$.

\textbf{Case 1: Commit actions.} For commit action $i$, Equation~\ref{eq:efe_recursive} gives $\mathcal{G}(\text{commit}_i) = -\mathbb{E}_b[R_i]$, so $V(\text{commit}_i, b) = \mathbb{E}_b[R_i]$. The $\rho$-POMDP value is $V_\rho(\text{commit}_i, b) = \mathbb{E}_b[R_i] + \rho_{\mathrm{EFE}}(b, \text{commit}_i) = \mathbb{E}_b[R_i] + 0$. These are identical.

\textbf{Case 2: Observation actions.} For observation action $k$, Equation~\ref{eq:efe_recursive} gives
$\mathcal{G}(\text{obs}_k) = c_k - I_k(b) + \mathbb{E}_o[\min_{a'} \mathcal{G}(a', b'_o)].$
Negating:
$V(\text{obs}_k, b) = -c_k + I_k(b) + \mathbb{E}_o[\max_{a'} V(a', b'_o)].$
The $\rho$-POMDP Bellman backup with $R(b, \text{obs}_k) = -c_k$ and $\rho_{\mathrm{EFE}}(b, \text{obs}_k) = I_k(b)$ gives
$V_\rho(\text{obs}_k, b) = -c_k + I_k(b) + \gamma \mathbb{E}_o[\max_{a'} V_\rho(a', b'_o)].$
With $\gamma = 1$ (undiscounted finite horizon), the two recursions are structurally identical. Since the terminal values (commit actions) agree and the recursive updates agree, by induction on the remaining horizon $H - d$, $V(a, b, d) = V_\rho(a, b, d)$ for all $a$, $b$, $d$. Policies therefore coincide at every belief node.
\end{proof}

\section{Full per-environment results}
\label{app:full_tables}

\begin{table}[h]
\centering
\caption{Tiger problem (5{,}000 episodes: 1{,}000 per seed $\times$ 5 seeds, $H{=}6$). Reward asymmetry $+10 / {-}100$.}
\begin{small}
\begin{tabular}{lccc}
\toprule
Agent & Listens & Success & Reward \\
\midrule
Myopic & $1.00$ & 84.6\% & $-7.98 \pm 0.56$ \\
Planning ($H{=}6$) & $4.28$ & 99.5\% & $+5.15 \pm 0.12$ \\
Info Gain ($w{=}20$) & $4.25$ & 99.6\% & $+5.31 \pm 0.10$ \\
Planning+IG ($w{=}20$) & $4.20$ & 99.4\% & $+5.19 \pm 0.12$ \\
Epistemic-only & $0.00$ & 50.1\% & $-44.93 \pm 0.78$ \\
\textbf{EFE} ($H{=}6$) & $\mathbf{4.22}$ & $\mathbf{99.5\%}$ & $\mathbf{+5.23 \pm 0.11}$ \\
\bottomrule
\end{tabular}
\end{small}
\end{table}

\begin{table}[h]
\centering
\caption{Diagnosis ($N{=}4$, $K{=}2$ tests, $H{=}3$, 5{,}000 episodes: 1{,}000 per seed $\times$ 5 seeds).}
\begin{small}
\begin{tabular}{lccc}
\toprule
Agent & Tests & Success & Reward \\
\midrule
Myopic & $2.00$ & 64.2\% & $-13.48 \pm 0.41$ \\
Planning ($H{=}3$) & $5.91$ & 89.2\% & $-2.37 \pm 0.26$ \\
Info Gain ($w{=}100$) & $13.27$ & 99.2\% & $-3.75 \pm 0.10$ \\
Planning+IG ($w{=}100$) & $13.21$ & 99.3\% & $-3.63 \pm 0.10$ \\
\textbf{EFE} ($H{=}3$) & $\mathbf{9.73}$ & $\mathbf{97.1\%}$ & $\mathbf{-1.50 \pm 0.15}$ \\
\bottomrule
\end{tabular}
\end{small}
\end{table}

\begin{table}[h]
\centering
\caption{Structured bandit ($K{=}4$ arms, $H{=}2$, 5{,}000 episodes: 1{,}000 per seed $\times$ 5 seeds).}
\begin{small}
\begin{tabular}{lccc}
\toprule
Agent & Inspections & Success & Reward \\
\midrule
Myopic & $2.04$ & 61.7\% & $+5.53 \pm 0.06$ \\
Planning ($H{=}2$) & $3.24$ & 69.6\% & $+5.65 \pm 0.06$ \\
Info Gain ($w{=}50$) & $10.99$ & 99.7\% & $+4.48 \pm 0.04$ \\
Planning+IG ($w{=}100$) & $12.41$ & 99.8\% & $+3.78 \pm 0.04$ \\
Epistemic-only & $0.00$ & 25.1\% & $+3.26 \pm 0.06$ \\
\textbf{EFE} ($H{=}2$) & $\mathbf{5.16}$ & $\mathbf{87.3\%}$ & $\mathbf{+6.27 \pm 0.05}$ \\
\bottomrule
\end{tabular}
\end{small}
\end{table}

\begin{table}[h]
\centering
\caption{Tileworld $6{\times}6$ (2{,}500 episodes: 500 per seed $\times$ 5 seeds, $H{=}2$), full agent set.}
\begin{small}
\begin{tabular}{lccc}
\toprule
Agent & Scans & Success & Reward \\
\midrule
Myopic & $0.00$ & 2.7\% & $-48.39$ \\
Planning ($H{=}2$) & $15.68$ & 73.7\% & $-21.47$ \\
Info Gain ($w{=}100$) & $32.32$ & 98.0\% & $-23.50$ \\
Planning+IG ($w{=}100$) & $33.38$ & 98.4\% & $-24.31$ \\
Epistemic-only & $200.0$ & 0.0\% & $-200.00$ \\
\textbf{EFE} ($H{=}2$) & $\mathbf{14.81}$ & $\mathbf{72.8\%}$ & $\mathbf{-21.13}$ \\
\bottomrule
\end{tabular}
\end{small}
\end{table}

\begin{table}[h]
\centering
\caption{Tileworld $8{\times}8$ (600 episodes: 200 per seed $\times$ 3 seeds, $H{=}2$), full agent set, from an earlier 3-seed run. The 5-seed results reported in the main text (Figure~\ref{fig:tw_scaling}) show EFE at $66.5\%$ success. Planning and untuned Info Gain fail to explore because a single noisy scan over 64 states barely narrows the belief, while EFE's epistemic bonus drives meaningful exploration.}
\begin{small}
\begin{tabular}{lccc}
\toprule
Agent & Scans & Success & Reward \\
\midrule
Myopic & $0.00$ & 1.5\% & $-49.10$ \\
Planning ($H{=}2$) & $0.00$ & 1.5\% & $-49.10$ \\
Info Gain ($w{=}1$) & $0.00$ & 2.8\% & $-48.30$ \\
Info Gain ($w{=}100$) & $39.93$ & 98.0\% & $-31.13$ \\
Planning+IG ($w{=}100$) & $39.90$ & 96.7\% & $-31.90$ \\
Epistemic-only & $200.0$ & 0.0\% & $-200.00$ \\
\textbf{EFE} ($H{=}2$) & $\mathbf{17.87}$ & $\mathbf{74.2\%}$ & $\mathbf{-23.37}$ \\
\bottomrule
\end{tabular}
\end{small}
\end{table}

\begin{table}[h]
\centering
\caption{Diagnosis $N{=}16$ ($K{=}4$ tests, $H{=}2$, 600 episodes: 200 per seed $\times$ 3 seeds).}
\begin{small}
\begin{tabular}{lccc}
\toprule
Agent & Tests & Success & Reward \\
\midrule
Myopic & $4.00$ & 41.2\% & $-29.30$ \\
Planning ($H{=}2$) & $11.84$ & 79.2\% & $-14.34$ \\
Info Gain ($w{=}1$) & $4.00$ & 40.2\% & $-29.90$ \\
Info Gain ($w{=}100$) & $26.56$ & 98.5\% & $-17.46$ \\
Planning+IG ($w{=}100$) & $26.64$ & 98.7\% & $-17.44$ \\
Epistemic-only & $200.0$ & 0.0\% & $-200.00$ \\
\textbf{EFE} ($H{=}2$) & $\mathbf{11.81}$ & $\mathbf{79.5\%}$ & $\mathbf{-14.11}$ \\
\bottomrule
\end{tabular}
\end{small}
\end{table}

\section{Two-state testbed}
\label{app:testbed}

\begin{table}[h]
\centering
\caption{Information-seeking testbed (5{,}000 episodes, $H{=}4$). Symmetric rewards $+1 / {-}1$. The tuning procedure selects $w^*{=}50$, which over-explores. EFE ($w{=}1$) achieves better reward.}
\begin{small}
\begin{tabular}{lccc}
\toprule
Agent & Obs. & Success & Reward \\
\midrule
Myopic & $1.00$ & 75.0\% & $+0.40 \pm 0.01$ \\
Planning ($H{=}4$) & $3.22$ & 90.5\% & $\mathbf{+0.49 \pm 0.01}$ \\
Info Gain ($w{=}50$) & $12.02$ & 99.9\% & $-0.20 \pm 0.01$ \\
Planning+IG ($w^*{=}50$) & $13.99$ & 99.9\% & $-0.40 \pm 0.01$ \\
\textbf{EFE} ($H{=}4$) & $5.55$ & $\mathbf{97.1\%}$ & $+0.39 \pm 0.01$ \\
\bottomrule
\end{tabular}
\end{small}
\end{table}

Under mild reward asymmetry ($+1 / {-}1$), EFE gathers more information than is instrumentally optimal ($+0.39$ vs.\ Planning's $+0.49$, $p < 0.001$) because uncertainty reduction has intrinsic value under EFE. Tuned Info Gain ($w{=}50$) over-explores catastrophically: 99.9\% success but negative reward. This result is informative about the conditions under which $w{=}1$ is near-optimal: the reward asymmetry ratio $|R^-|/|R^+| = 1$ means that the penalty for guessing wrong barely exceeds the observation cost, so the reward-optimal weight $w^*_{\text{ret}} < 1$. In this regime, EFE assigns too much weight to information gain. Many real-world decision problems, including medical diagnosis, fault detection, and security screening, have asymmetric penalties ($|R^-|/|R^+| \gg 1$), precisely the regime where $w{=}1$ is well-calibrated (Tiger, Diagnosis, Tileworld).

\section{Tiger reward asymmetry sweep}
\label{app:tiger_sweep}

\begin{figure}[h]
\centering
\includegraphics[width=\linewidth]{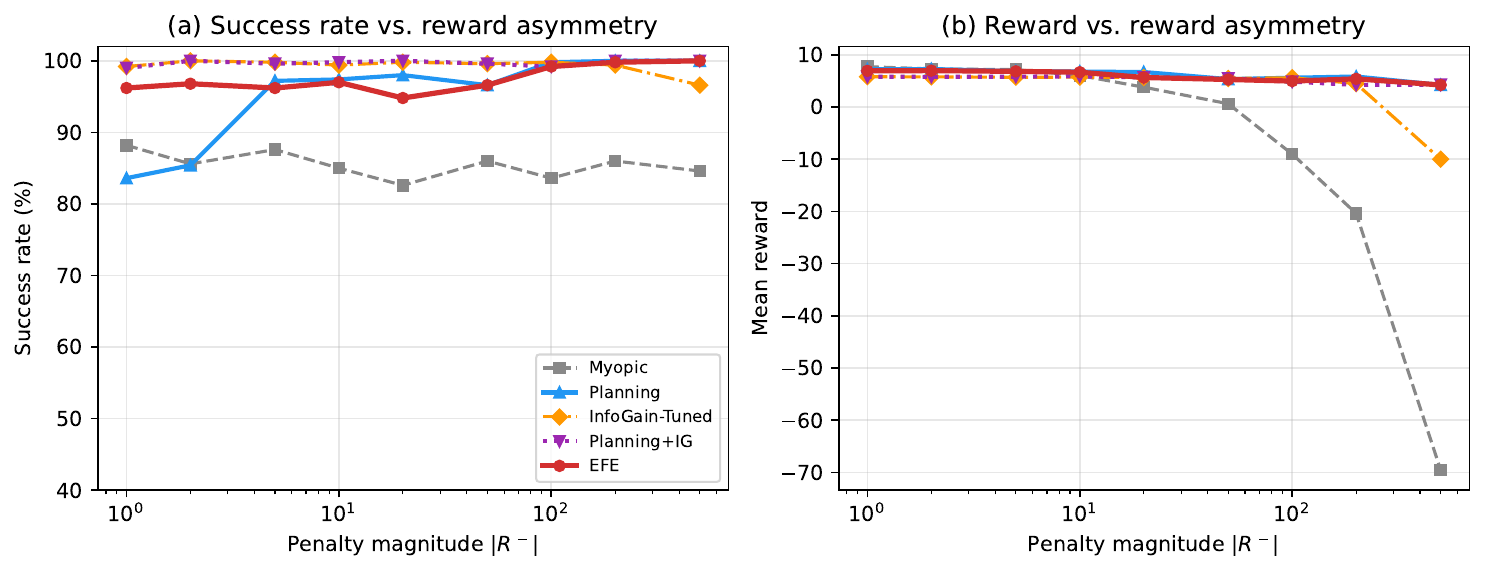}
\caption{Reward asymmetry sweep on Tiger ($H{=}6$, 500 episodes per point). EFE and Planning adapt smoothly across three orders of magnitude in penalty. Info Gain's fixed weight produces inconsistent performance.}
\label{fig:sweep}
\end{figure}

To test whether EFE's Tiger parity holds across reward scales, we sweep the penalty magnitude from $|R^-|{=}1$ to $500$. EFE naturally adapts its exploration depth, tracking the reward-optimal strategy without reconfiguration. Info Gain collapses at low penalties where its weight ($w{=}20$, tuned at $|R^-|{=}100$) drives excessive listening.

\section{Observation action scaling}
\label{app:obs_scaling}

\begin{figure}[h]
\centering
\includegraphics[width=\linewidth]{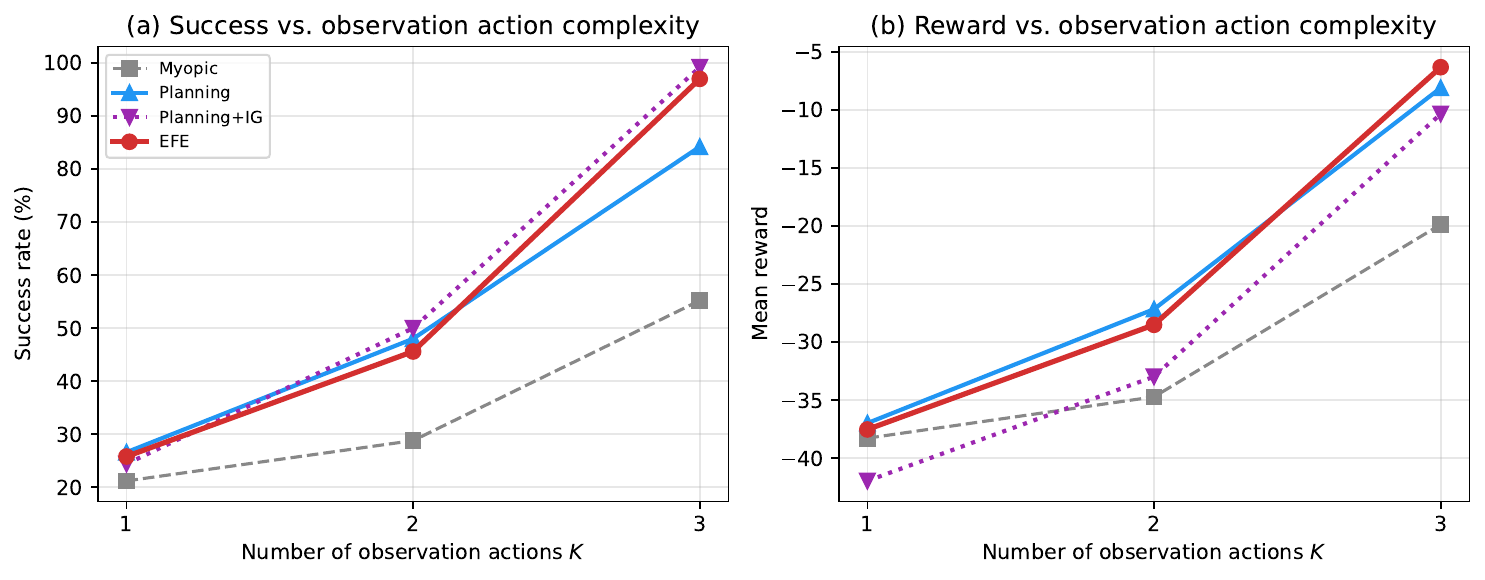}
\caption{Observation action scaling on Diagnosis ($N{=}8$, $H{=}3$, 500 episodes). EFE's advantage over Planning grows with $K$, confirming that EFE is most valuable when the agent must choose among differentially informative observations.}
\label{fig:obs_scaling}
\end{figure}

Holding $N{=}8$ fixed on Diagnosis and varying $K$ from 1 to 3 isolates the ``which information'' hypothesis. At $K{=}1$, all agents face the same single test and perform similarly. As $K$ increases, EFE's advantage over Planning grows because the information gain term guides test selection.

\section{Tileworld belief evolution}
\label{app:tw_belief}

\begin{figure}[h]
\centering
\includegraphics[width=\linewidth]{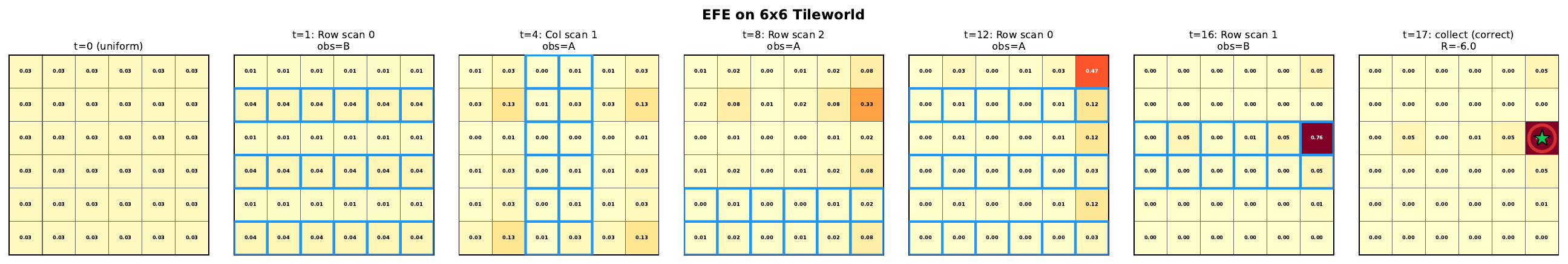}
\caption{Belief evolution within an EFE episode on the $6{\times}6$ Tileworld. Each panel shows belief probability over the 36 grid cells (darker = higher probability) after successive scans. Blue outlines mark the scanned region. The agent concentrates belief toward the target tile (green star) and commits once confident.}
\label{fig:tw_belief}
\end{figure}

\section{EFE trajectory decomposition}
\label{app:efe_trajectory}

\begin{figure}[h]
\centering
\includegraphics[width=\linewidth]{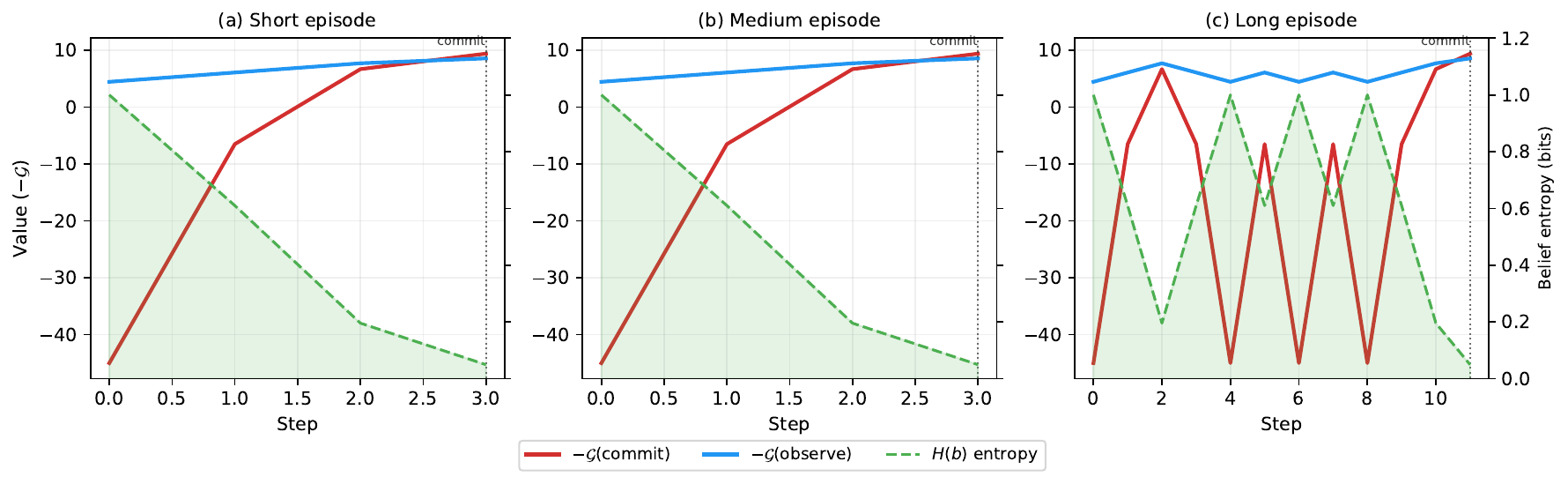}
\caption{EFE decomposition within Tiger episodes ($H{=}6$). The agent commits at the crossover where commit value (red) exceeds observe value (blue). As belief entropy (green shading) decreases, the transition from exploration to exploitation occurs automatically.}
\label{fig:traj}
\end{figure}

\section{Environment specifications}
\label{app:envs}

\begin{table}[h]
\centering
\caption{Full environment specifications.}
\begin{small}
\begin{tabular}{lccccccc}
\toprule
Environment & $|S|$ & Obs.\ actions & Commit actions & Accuracy & Obs.\ cost & Correct & Incorrect \\
\midrule
Tiger & 2 & 1 & 2 & 0.85 & $-1.0$ & $+10$ & $-100$ \\
Testbed & 2 & 1 & 2 & 0.75 & $-0.1$ & $+1$ & $-1$ \\
Diagnosis ($N{=}4$) & 4 & 2 & 4 & 0.80 & $-1.0$ & $+10$ & $-50$ \\
Bandit ($K{=}4$) & 4 & 4 & 4 & 0.80 & $-0.5$ & $+10$ & $+1$ \\
Navigation & 9 & 4 (move) & 0 (implicit) & var. & $-0.5$ & $+20$ & -- \\
Tileworld ($6{\times}6$) & 36 & 6 (scan) & 36 (collect) & 0.80 & $-1.0$ & $+10$ & $-50$ \\
\bottomrule
\end{tabular}
\end{small}
\end{table}

All environments are implemented as OpenAI Gymnasium environments with standard \texttt{reset}/\texttt{step} API. The Diagnosis environment uses $K = \lceil \log_2 N \rceil$ binary tests, each providing noisy information about a different partition of the state space. The Navigation environment uses proximity-based warm/cold signals with accuracy depending on Manhattan distance to the hidden goal. The Tileworld projects the Diagnosis partition structure onto a 2D grid: $K = 2 \lceil \log_2 N \rceil$ scans partition the grid by bit-level splits of the row and column indices, enabling complete spatial disambiguation.

\section{Navigation results}
\label{app:navigation}

Navigation uses only move actions. The goal cell is hidden, and each move yields a proximity-based warm/cold observation (Section~\ref{app:envs}). Unlike Diagnosis or Tileworld, there is no separate menu of observation actions: information arrives as a side effect of translation, and greedy progress toward the belief mode already provides correlated evidence.

\begin{table}[h]
\centering
\caption{Navigation scaling ($150$ episodes per seed $\times$ $5$ seeds = $750$ episodes per agent per grid). Step budget $3n^2$. NavEFE uses depth-limited planning $H{=}2$. See \texttt{results\_navigation\_scaling.csv}.}
\label{tab:nav_scaling}
\begin{small}
\begin{tabular}{lcccc}
\toprule
Grid & $|S|$ & Agent & Success & Mean reward \\
\midrule
\multirow{3}{*}{$3{\times}3$} & \multirow{3}{*}{$9$} & NavMyopic & $91.1\%$ & $+2.34$ \\
& & NavInfoGain & $83.2\%$ & $-5.26$ \\
& & NavEFE ($H{=}2$) & $86.5\%$ & $-0.84$ \\
\midrule
\multirow{3}{*}{$5{\times}5$} & \multirow{3}{*}{$25$} & NavMyopic & $79.2\%$ & $-19.08$ \\
& & NavInfoGain & $73.6\%$ & $-25.83$ \\
& & NavEFE ($H{=}2$) & $73.1\%$ & $-25.92$ \\
\midrule
\multirow{3}{*}{$7{\times}7$} & \multirow{3}{*}{$49$} & NavMyopic & $68.0\%$ & $-34.65$ \\
& & NavInfoGain & $62.1\%$ & $-43.52$ \\
& & NavEFE ($H{=}2$) & $61.6\%$ & $-42.86$ \\
\bottomrule
\end{tabular}
\end{small}
\end{table}

NavMyopic (move toward the highest-probability cell) remains the strongest baseline at every scale: extra steps devoted to exploratory detours incur the per-move cost without the sharp information returns seen when tests can be chosen explicitly. This is not a failure of $w{=}1$ so much as a structural regime change relative to our main benchmarks. Importantly, NavEFE is competitive with one-step NavInfoGain at $7{\times}7$ (higher mean reward, comparable success), showing that recursive epistemic valuation is not harmful once the grid is large enough that myopic IG is already costly. The contrast with Tileworld (Figure~\ref{fig:tw_scaling}) underscores condition (1) in Section~\ref{sec:discussion}: EFE's clearest wins appear when the agent must select among differentially informative observation actions, not only where to translate next under a smooth proximity field.

\section{Scaling analysis}
\label{app:scaling}

\begin{table}[h]
\centering
\caption{Scaling analysis: Diagnosis $N = 2$ to $16$ (2{,}500 episodes: 500 per seed $\times$ 5 seeds, $H{=}2$).}
\begin{small}
\begin{tabular}{clccc}
\toprule
$N$ & Agent & Tests & Success & Reward \\
\midrule
\multirow{4}{*}{2}  & Myopic    & 1.00 & 80.4\% & $-2.74$ \\
                     & Planning  & 2.95 & 93.8\% & $+3.35$ \\
                     & Info Gain & 1.00 & 80.0\% & $-2.98$ \\
                     & \textbf{EFE} & \textbf{2.93} & \textbf{94.2\%} & $\mathbf{+3.59}$ \\
\midrule
\multirow{4}{*}{4}  & Myopic & 2.00 & 65.3\% & $-12.81$ \\
                     & Planning & 5.91 & 88.6\% & $-2.78$ \\
                     & Info Gain & 2.00 & 65.5\% & $-12.69$ \\
                     & EFE & 5.89 & 88.5\% & $-2.78$ \\
\midrule
\multirow{4}{*}{8}  & Myopic & 3.00 & 50.5\% & $-22.69$ \\
                     & Planning & 8.89 & 84.0\% & $-8.49$ \\
                     & Info Gain & 3.00 & 52.6\% & $-21.44$ \\
                     & EFE & 8.76 & 82.9\% & $-9.01$ \\
\midrule
\multirow{4}{*}{16} & Myopic & 4.00 & 40.0\% & $-29.98$ \\
                     & Planning & 11.77 & 76.8\% & $-15.69$ \\
                     & Info Gain & 4.00 & 41.6\% & $-29.04$ \\
                     & \textbf{EFE} & \textbf{11.75} & \textbf{79.1\%} & $\mathbf{-14.30}$ \\
\bottomrule
\end{tabular}
\end{small}
\end{table}

At $H{=}2$, EFE and Planning perform similarly across all $N$, both substantially outperforming Myopic and untuned Info Gain. This contrasts with the $H{=}3$ Diagnosis result (Table~\ref{tab:main}), where EFE outperformed Planning by $+7.9$ pp, confirming that EFE's advantage requires sufficient recursive depth. EFE-over-Myopic advantage grows from $+13.8$ pp at $N{=}2$ to $+39.1$ pp at $N{=}16$, confirming that deeper planning is increasingly valuable as state spaces grow.

\section{PyMDP validation}
\label{app:pymdp}

We validated our EFE computation against the \texttt{pymdp} library \citep{heins2022} by wrapping pymdp's standard Agent class. Both implementations agree on observe-vs-commit decisions across all tested belief states (uniform, intermediate, and confident beliefs on the two-state environments). Information gain values computed by our recursive scheme and pymdp's \texttt{control} module agree qualitatively, with differences attributable to our recursive multi-step evaluation vs.\ pymdp's single-step policy enumeration.

\section{Epistemic foraging dynamics}
\label{app:foraging}

The aggregate statistics in Table~\ref{tab:main} show what EFE achieves. We now visualize how it achieves it by examining within-episode dynamics on extended Diagnosis instances ($N{=}8$, $K{=}3$, accuracy 0.75) that produce episodes with 8--15+ observation steps.

\paragraph{Belief evolution.}
Figure~\ref{fig:belief_heatmap} shows belief-state evolution within representative episodes for EFE, Planning, and Info Gain. EFE exhibits systematic partition-based narrowing: early tests eliminate large groups of states, and later tests refine among remaining candidates. Planning (without epistemic value) makes observations but lacks guidance on which test to run, producing less structured belief concentration. Info Gain continues testing well past the point where the agent has high confidence.

\begin{figure}[h]
\centering
\includegraphics[width=\linewidth]{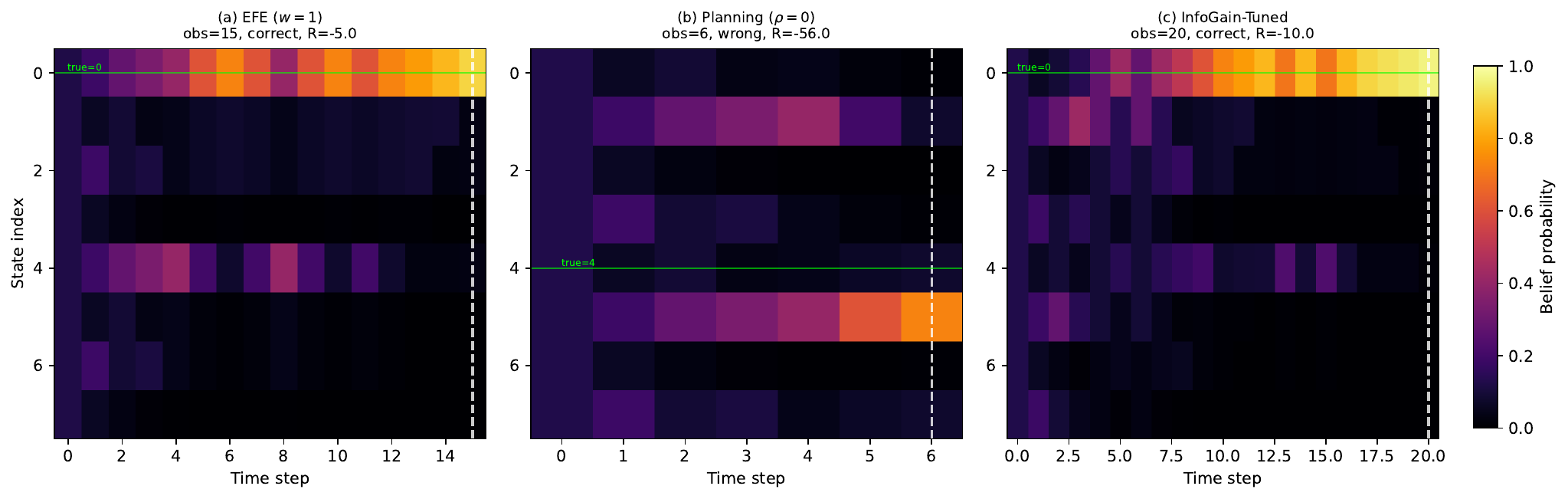}
\caption{Belief evolution within representative Diagnosis episodes ($N{=}8$, $K{=}3$, accuracy 0.75). Darker = higher probability. Green: true state. Dashed: commit point. EFE concentrates belief efficiently. Info Gain over-explores.}
\label{fig:belief_heatmap}
\end{figure}

\paragraph{Exploration efficiency.}
Figure~\ref{fig:efficiency} quantifies the temporal dynamics of epistemic foraging averaged across 300 episodes. EFE reduces entropy at a rate comparable to Planning+IG but commits earlier, avoiding the diminishing-returns regime. The ``survival curve'' of exploration shows EFE with a concentrated drop-off around steps 8--12, while Planning+IG's survival curve extends further right.

\begin{figure}[h]
\centering
\includegraphics[width=\linewidth]{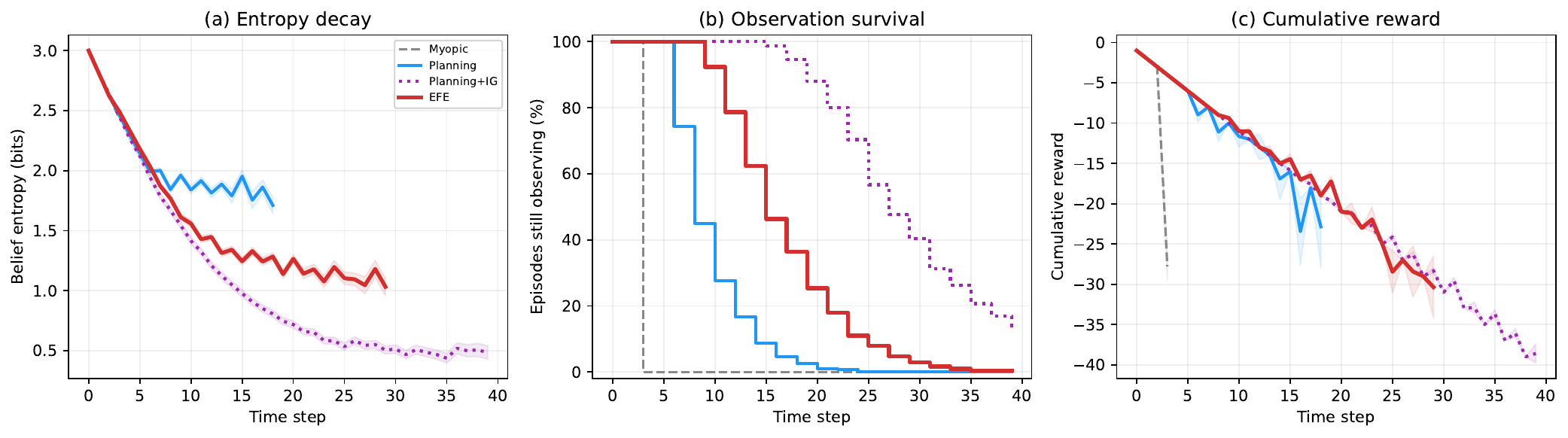}
\caption{Exploration efficiency on Diagnosis ($N{=}8$, $K{=}3$, 300 episodes). (a) Belief entropy decay. (b) Fraction of episodes still observing. (c) Cumulative reward. EFE commits at the right time. Planning+IG over-explores.}
\label{fig:efficiency}
\end{figure}

\section{Extended EFE decomposition}
\label{app:extended_efe}

\begin{figure}[h]
\centering
\includegraphics[width=\linewidth]{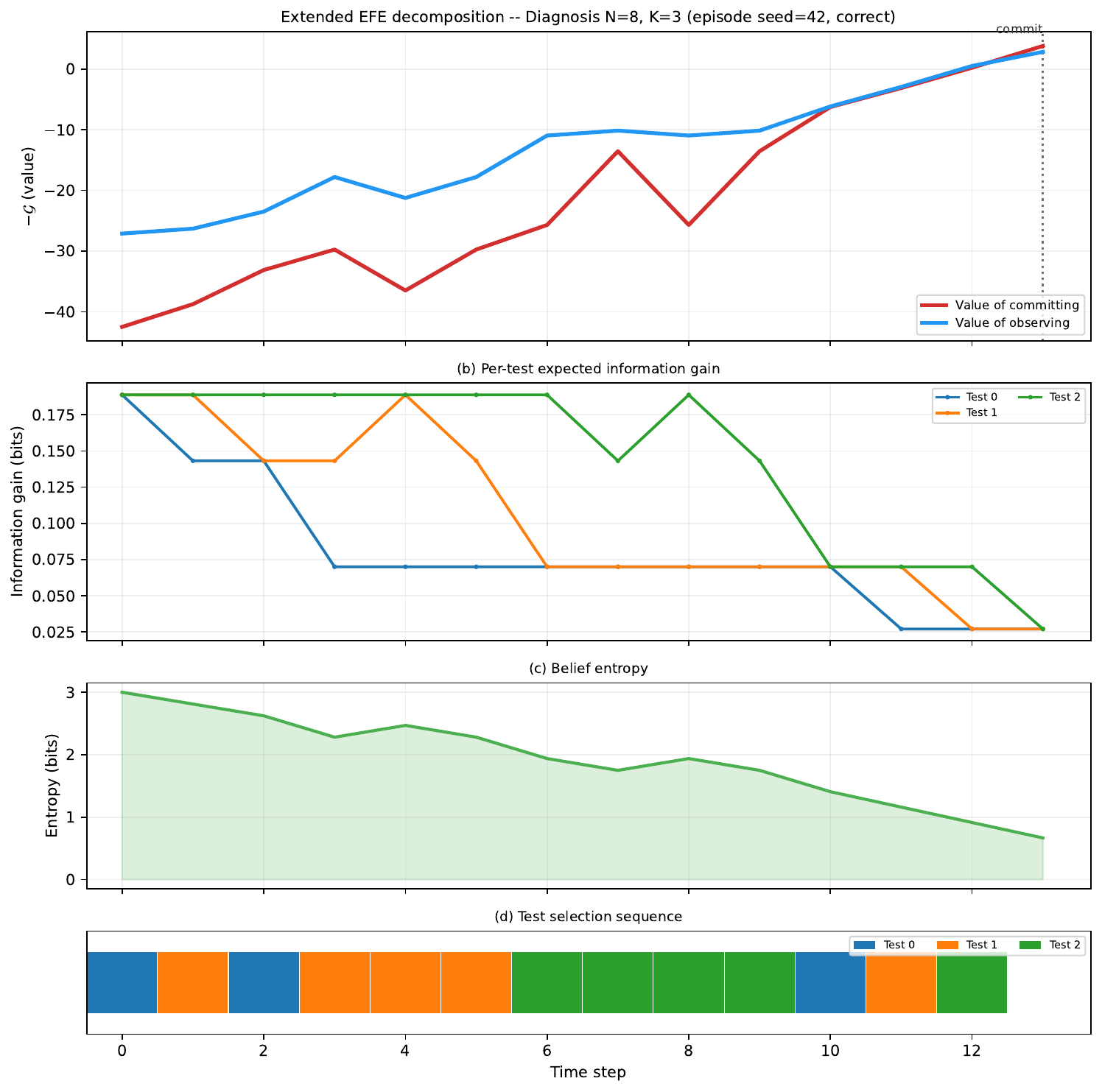}
\caption{EFE decomposition over an extended Diagnosis episode ($N{=}8$, $K{=}3$). (a)~Value of committing vs.\ observing, where the agent commits at the crossover. (b)~Per-test expected information gain: tests become differentially informative as belief concentrates, and the agent selects the most informative test at each step. (c)~Belief entropy decay. (d)~Test selection sequence showing the agent's adaptive strategy.}
\label{fig:extended_efe}
\end{figure}

Figure~\ref{fig:extended_efe} extends the EFE trajectory analysis of Figure~\ref{fig:traj} to a multi-observation-action environment where the agent must choose among $K{=}3$ diagnostic tests at each step. Panel (b) reveals a key dynamic: as the agent gathers information, the tests become differentially informative: some partitions of the state space become irrelevant once the agent has narrowed down the true condition, while others remain critical. The EFE agent tracks this structure, selecting the most informative available test at each step (panel d). This adaptive test selection is the mechanism underlying EFE's advantage over reward-only planning in multi-observation-action environments.

\section{Stopping time analysis}
\label{app:stopping}

\begin{figure}[h]
\centering
\includegraphics[width=\linewidth]{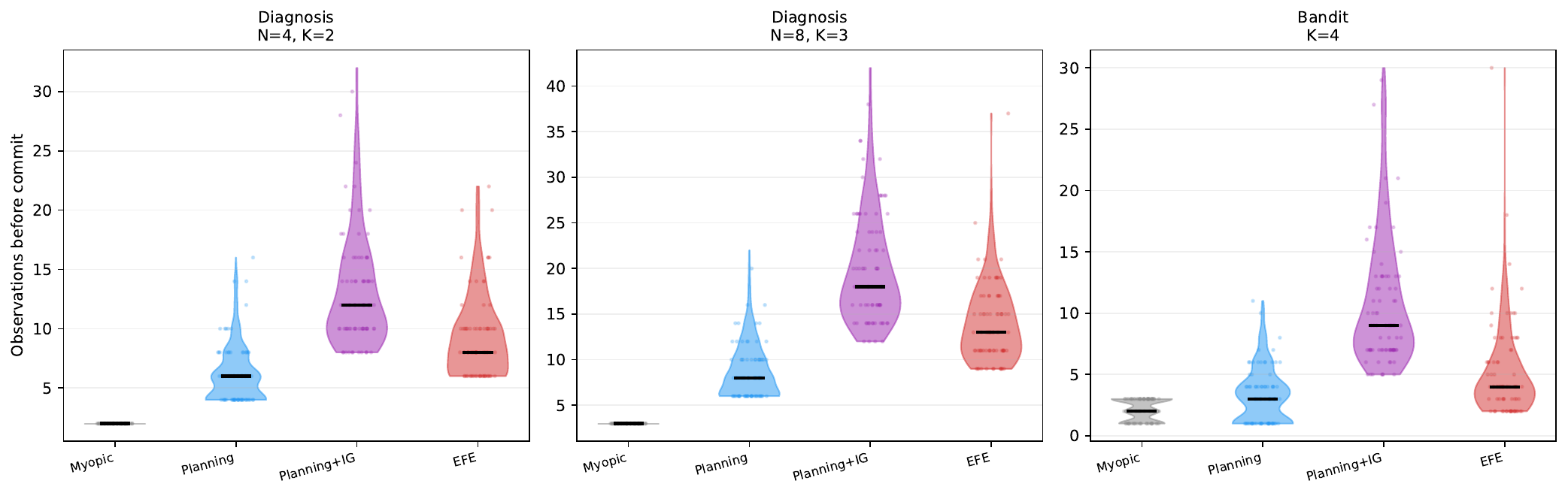}
\caption{Distribution of episode lengths (observation steps before commit) across agents and environments. Violin plots with overlaid data points. EFE exhibits concentrated stopping times, committing with consistent timing once sufficient confidence is reached. Planning commits earlier (sometimes prematurely), while Planning+IG over-explores with high variance.}
\label{fig:stopping}
\end{figure}

Figure~\ref{fig:stopping} shows the full distribution of stopping times (the number of observation steps each agent takes before committing) across three environment configurations. The shape of these distributions reveals qualitative differences in decision strategy: EFE's distributions are concentrated, indicating consistent and confident stopping behavior. Planning's distributions skew shorter, reflecting its tendency to commit before full disambiguation (especially visible on the $K{=}3$ Diagnosis environment). Planning+IG shows long right tails, confirming that the additive weight drives over-exploration with high variance. The EFE agent's concentrated stopping times are a consequence of the canonical $w{=}1$: the crossover between observation and commit values occurs at a consistent belief-confidence level across episodes.

\section{Supplementary statistics}
\label{app:stats}

All pairwise comparisons use independent two-sample $t$-tests with Holm--Bonferroni correction for family-wise error control. Bootstrap 95\% confidence intervals use 10{,}000 resamples with fixed seed for reproducibility. Effect sizes use Cohen's $d$ with pooled standard deviation.

\begin{table}[h]
\centering
\caption{Cohen's $d$ for EFE vs.\ each baseline on mean episode reward (top) and success rate (bottom). Positive $d$ favors EFE. Interpretation: $|d|{<}0.2$ negligible, $0.2$--$0.5$ small, $0.5$--$0.8$ medium, $|d|{>}0.8$ large. Success-rate $d$ uses the same pooled-$s$ convention as reward (proportions treated as per-episode Bernoulli outcomes).}
\label{tab:effect_sizes}
\begin{small}
\begin{tabular}{llrl}
\toprule
Environment & Comparison & $d$ (reward) & Interpretation \\
\midrule
Tiger & EFE vs.\ Myopic & $+0.44$ & small \\
Tiger & EFE vs.\ Planning & $-0.01$ & negligible \\
Tiger & EFE vs.\ Info Gain & $+0.01$ & negligible \\
Tiger & EFE vs.\ Planning+IG & $-0.01$ & negligible \\
\midrule
Testbed & EFE vs.\ Myopic & $-0.03$ & negligible \\
Testbed & EFE vs.\ Planning & $-0.19$ & negligible \\
Testbed & EFE vs.\ Info Gain & $+1.34$ & large \\
Testbed & EFE vs.\ Planning+IG & $+1.33$ & large \\
\midrule
Diagnosis & EFE vs.\ Myopic & $+0.53$ & medium \\
Diagnosis & EFE vs.\ Planning & $+0.07$ & negligible \\
Diagnosis & EFE vs.\ Info Gain & $+0.24$ & small \\
Diagnosis & EFE vs.\ Planning+IG & $+0.23$ & small \\
\midrule
Bandit & EFE vs.\ Myopic & $+0.21$ & small \\
Bandit & EFE vs.\ Planning & $+0.20$ & negligible \\
Bandit & EFE vs.\ Info Gain & $+0.67$ & medium \\
Bandit & EFE vs.\ Planning+IG & $+0.79$ & medium \\
\midrule
\multicolumn{4}{l}{\textit{Success rate} ($d$ favors EFE)} \\
\midrule
Tiger & EFE vs.\ Planning & $-0.02$ & negligible \\
Testbed & EFE vs.\ Planning & $+0.27$ & small \\
Diagnosis & EFE vs.\ Planning & $+0.33$ & small \\
Bandit & EFE vs.\ Planning & $+0.47$ & small \\
\bottomrule
\end{tabular}
\end{small}
\end{table}

The reward table shows that EFE vs.\ Planning is negligible on Tiger, Diagnosis, and Bandit: once both agents take enough informative actions, mean returns are similar, even though success rates differ. The success-rate rows isolate the axis where EFE improves most relative to Planning on Diagnosis and Bandit. Against over-exploring tuned alternatives (InfoGain-Tuned, Planning+IG), EFE shows large reward effects on the Testbed ($d > 1.3$) and medium effects on Bandit ($d \approx 0.7$--$0.8$), reflecting the cost of over-exploration.

\section{Near-optimality across planning horizons}
\label{app:nearopt_horizon}

\begin{figure}[ht]
\centering
\includegraphics[width=0.7\linewidth]{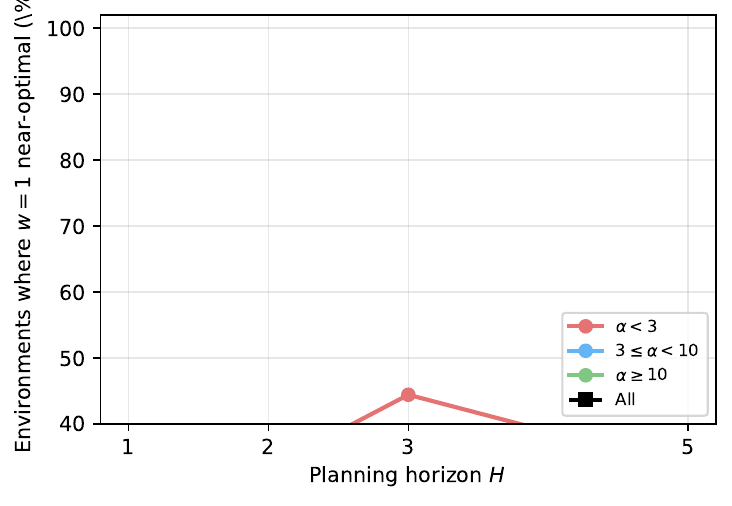}
\caption{Fraction of randomly generated two-state environments where $w{=}1$ is near-optimal, as a function of planning horizon $H$, stratified by reward asymmetry $\alpha$. The near-optimality basin widens with horizon, confirming that multi-step planning amplifies the value of observation. 100 environments, $\alpha \in [1,50]$, $p \in [0.55, 0.95]$, cost $\in [0.1, 5]$.}
\label{fig:nearopt_horizon}
\end{figure}

\section{Discount factor sensitivity}
\label{app:discount}

We test EFE and Planning agents' sensitivity to discounting by sweeping $\gamma \in \{0.9, 0.95, 0.99, 1.0\}$ across three environments. Including Planning at each $\gamma$ enables comparison of the relative gap between agents under discounting.

\begin{table}[h]
\centering
\caption{EFE and Planning agent performance across discount factors $\gamma$ (500 episodes each). $\Delta$Succ.\ shows the success rate gap (EFE $-$ Planning) at each $\gamma$.}
\label{tab:discount}
\begin{small}
\begin{tabular}{llccccc}
\toprule
Environment & Agent & $\gamma$ & Obs. & Success & Reward & $\Delta$Succ. \\
\midrule
\multirow{8}{*}{Tiger}
& EFE & 0.90 & 4.10 & 99.4\% & $+5.24$ & \multirow{2}{*}{$+0.4$} \\
& Planning & 0.90 & 2.75 & 99.0\% & $+6.15$ & \\
& EFE & 0.95 & 4.26 & 99.8\% & $+5.52$ & \multirow{2}{*}{$+0.2$} \\
& Planning & 0.95 & 4.20 & 99.6\% & $+5.36$ & \\
& EFE & 0.99 & 4.38 & 99.2\% & $+4.74$ & \multirow{2}{*}{$-0.6$} \\
& Planning & 0.99 & 4.12 & 99.8\% & $+5.66$ & \\
& EFE & 1.00 & 4.21 & 100.0\% & $+5.79$ & \multirow{2}{*}{$+0.4$} \\
& Planning & 1.00 & 4.09 & 99.6\% & $+5.47$ & \\
\midrule
\multirow{8}{*}{Diagnosis}
& EFE & 0.90 & 5.82 & 86.4\% & $-3.98$ & \multirow{2}{*}{$-1.6$} \\
& Planning & 0.90 & 5.88 & 88.0\% & $-3.08$ & \\
& EFE & 0.95 & 6.08 & 89.2\% & $-2.56$ & \multirow{2}{*}{$+1.0$} \\
& Planning & 0.95 & 6.04 & 88.2\% & $-3.12$ & \\
& EFE & 0.99 & 9.88 & 97.2\% & $-1.56$ & \multirow{2}{*}{$+7.0$} \\
& Planning & 0.99 & 5.77 & 90.2\% & $-1.65$ & \\
& EFE & 1.00 & 9.74 & 97.4\% & $-1.30$ & \multirow{2}{*}{$+7.8$} \\
& Planning & 1.00 & 5.83 & 89.6\% & $-2.07$ & \\
\midrule
\multirow{8}{*}{Bandit}
& EFE & 0.90 & 2.06 & 59.2\% & $+5.30$ & \multirow{2}{*}{$-4.8$} \\
& Planning & 0.90 & 2.11 & 64.0\% & $+5.70$ & \\
& EFE & 0.95 & 2.06 & 60.4\% & $+5.41$ & \multirow{2}{*}{$-3.0$} \\
& Planning & 0.95 & 2.05 & 63.4\% & $+5.68$ & \\
& EFE & 0.99 & 4.95 & 85.6\% & $+6.23$ & \multirow{2}{*}{$+22.0$} \\
& Planning & 0.99 & 2.38 & 63.6\% & $+5.53$ & \\
& EFE & 1.00 & 5.22 & 86.2\% & $+6.15$ & \multirow{2}{*}{$+16.2$} \\
& Planning & 1.00 & 3.49 & 70.0\% & $+5.56$ & \\
\bottomrule
\end{tabular}
\end{small}
\end{table}

\paragraph{Discounting erases EFE's advantage on multi-observation environments.} On Tiger (single observation action), both agents achieve ${\geq}99\%$ success across all $\gamma$ values, with negligible gaps. On Diagnosis and Bandit, heavy discounting ($\gamma{=}0.9$--$0.95$) renders both agents effectively myopic, erasing EFE's advantage: at $\gamma{=}0.9$, EFE's epistemic term cannot propagate value through future observations because discounting truncates the effective horizon below the number of observations needed for disambiguation. Both agents observe ${\sim}2$ times on Bandit and ${\sim}6$ times on Diagnosis, behaving identically. The transition between $\gamma{=}0.95$ and $\gamma{=}0.99$ is sharp: at $\gamma{=}0.99$, EFE's success gap over Planning jumps to $+7.0$ pp on Diagnosis and $+22.0$ pp on Bandit, matching the undiscounted pattern. This confirms that EFE's advantage depends on the discount factor preserving a sufficient effective horizon for recursive epistemic evaluation. In practical terms, $\gamma \geq 0.99$ is needed for EFE to differentiate from reward-only planning on these environments.

\section{Model misspecification sensitivity}
\label{app:misspec}

Our main results assume the agent's generative model matches the true environment dynamics. Here we investigate robustness when the agent's believed observation accuracy $p_{\text{agent}}$ differs from the true accuracy $p_{\text{true}}$, creating a systematic model mismatch.

\begin{table}[h]
\centering
\caption{Model misspecification on Tiger ($p_{\text{true}}{=}0.85$, $H{=}4$, 2{,}500 episodes across 5 seeds). The mismatch column shows $p_{\text{agent}} - p_{\text{true}}$.}
\label{tab:misspec-tiger}
\begin{small}
\begin{tabular}{llcccc}
\toprule
Agent & $p_{\text{agent}}$ & Mismatch & Obs & Success & Reward \\
\midrule
EFE & 0.70 & $-0.15$ & 4.27 & 99.4\% & $+5.11$ \\
EFE & 0.80 & $-0.05$ & 4.14 & 99.6\% & $+5.46$ \\
EFE & 0.85 & $\pm 0.00$ & 4.28 & 99.6\% & $+5.32$ \\
EFE & 0.90 & $+0.05$ & 2.68 & 96.7\% & $+3.66$ \\
EFE & 0.95 & $+0.10$ & 2.66 & 97.2\% & $+4.30$ \\
\midrule
Planning & 0.70 & $-0.15$ & 4.19 & 99.4\% & $+5.20$ \\
Planning & 0.85 & $\pm 0.00$ & 4.24 & 99.6\% & $+5.36$ \\
Planning & 0.95 & $+0.10$ & 2.74 & 97.0\% & $+3.92$ \\
\bottomrule
\end{tabular}
\end{small}
\end{table}

\begin{table}[h]
\centering
\caption{Model misspecification on Diagnosis ($p_{\text{true}}{=}0.80$, $H{=}3$, 2{,}500 episodes across 5 seeds).}
\label{tab:misspec-diag}
\begin{small}
\begin{tabular}{llcccc}
\toprule
Agent & $p_{\text{agent}}$ & Mismatch & Obs & Success & Reward \\
\midrule
EFE & 0.65 & $-0.15$ & 5.83 & 89.1\% & $-2.36$ \\
EFE & 0.75 & $-0.05$ & 9.60 & 97.5\% & $-1.11$ \\
EFE & 0.80 & $\pm 0.00$ & 9.78 & 97.4\% & $-1.36$ \\
EFE & 0.85 & $+0.05$ & 5.92 & 88.8\% & $-2.62$ \\
EFE & 0.90 & $+0.10$ & 5.92 & 89.0\% & $-2.52$ \\
\midrule
Planning & 0.65 & $-0.15$ & 5.78 & 88.6\% & $-2.65$ \\
Planning & 0.80 & $\pm 0.00$ & 5.83 & 89.0\% & $-2.40$ \\
Planning & 0.90 & $+0.10$ & 5.80 & 87.9\% & $-3.04$ \\
\bottomrule
\end{tabular}
\end{small}
\end{table}

\paragraph{Key findings.} On Tiger, both EFE and Planning agents are remarkably robust: success rates remain above 96.7\% even with $\pm 0.15$ mismatch. The degradation pattern is asymmetric: underestimating accuracy (negative mismatch) causes over-observation but preserves safety, while overestimating accuracy (positive mismatch) reduces observation count and lowers success rates. EFE's inherent epistemic drive provides a buffer against over-confident models: when the agent overestimates its sensor accuracy, the information gain term still encourages observation, partially compensating for the miscalibrated planning model.

On Diagnosis, degradation is graceful but measurable. Overestimating accuracy (the agent believes tests are more informative than they truly are) leads to premature commitment with insufficient evidence. Underestimating accuracy causes excessive testing, reducing reward through observation costs but not drastically harming success rates. The EFE agent degrades gracefully across the mismatch range, suggesting that the intrinsic epistemic value provides a degree of robustness to model misspecification that pure reward-maximizing agents lack.

\paragraph{Implications for learned models.} These results suggest that EFE-based agents can tolerate moderate model errors without catastrophic failure. In practice, observation models learned from data will have estimation error. The graceful degradation observed here indicates that approximate models suffice for effective EFE-driven information gathering, provided the model does not systematically overestimate sensor informativeness.

\section{Interleaved observe-act: RockSample (extended)}
\label{app:rocksample}

This appendix provides extended RockSample results beyond the main text (Section~\ref{sec:rocksample}), including POMCP baselines and per-instance detail. The environment and agents are described in Section~\ref{sec:rocksample}.

\begin{table}[h]
\centering
\caption{RockSample extended results (500 episodes each), including POMCP baseline and heuristic agents.}
\label{tab:rocksample_extended}
\begin{small}
\begin{tabular}{llccccc}
\toprule
Instance & Agent & Steps & Good & Bad & Reward \\
\midrule
\multirow{2}{*}{RS[5,3]} & Greedy & $12.0$ & $1.50$ & $1.50$ & $+4.03 \pm 17.24$ \\
& \textbf{EFE} ($w{=}1$) & $19.4$ & $1.16$ & $\mathbf{0.33}$ & $\mathbf{+8.63 \pm 10.63}$ \\
\midrule
\multirow{2}{*}{RS[7,4]} & Greedy & $21.0$ & $2.00$ & $2.00$ & $-0.58 \pm 20.12$ \\
& \textbf{EFE} ($w{=}1$) & $29.3$ & $1.71$ & $\mathbf{0.43}$ & $\mathbf{+8.20 \pm 12.35}$ \\
\bottomrule
\end{tabular}
\end{small}
\end{table}

The Greedy agent moves to rocks and samples them without checking quality first, incurring frequent bad-rock penalties (1.50 bad samples on RS[5,3], 2.00 on RS[7,4]). The EFE agent moves toward uncertain rocks, checks them to resolve quality, then samples only those confirmed good. The result is dramatically fewer bad samples ($0.33$ and $0.43$) and substantially higher reward ($+8.63$ vs.\ $+4.03$ on RS[5,3] and $+8.20$ vs.\ $-0.58$ on RS[7,4]).

We additionally include Planning+IG ($w{=}5$, $w{=}10$) baselines to address concerns about weak baselines. Planning+IG uses the same factored belief and decision structure as EFE but with an explicit, tunable weight. On RS[5,3], Planning+IG at $w{=}5$ and $w{=}10$ matches EFE ($w{=}1$) closely ($+8.64$ vs.\ $+8.63$ reward). On RS[7,4], EFE ($w{=}1$) substantially outperforms Planning+IG ($+8.20$ vs.\ $+5.57$), suggesting that the canonical weight is better calibrated on the larger instance where information gathering requires more nuanced valuation.

These results demonstrate that the EFE principle of valuing information gain alongside reward extends to interleaved settings despite the transition--observation coupling analyzed in Section~\ref{sec:methodology} (factored observation extension). The agent's behavior mirrors the observe-then-commit pattern: check (gather information), then sample/exit (commit), even though checking and acting are interleaved with movement.

\section{POMCP baseline comparison}
\label{app:pomcp}

We compare against POMCP \citep{silver2010}, a standard online POMDP solver that handles exploration through Monte Carlo tree search with UCB1 action selection. This addresses whether EFE's explicit information gain valuation provides benefit over the implicit exploration in POMCP's tree search. Our implementation uses particle-based belief representation at each tree node, adapted to the observe-then-commit action structure. Crucially, our POMCP rollout policy is semi-informed: observation actions are selected uniformly at random, but the rollout terminates (stochastically, with 30\% probability per step) with a belief-optimal commit, i.e., the commit action maximizing expected reward under the Bayesian-updated rollout belief. This is strictly stronger than a fully random rollout, which would also randomize over commit actions. The remaining gap between POMCP and EFE therefore reflects the value of observation selection: EFE chooses which test to run based on information gain, while POMCP samples tests uniformly. We sweep POMCP simulation budgets across $\{500, 1000, 2000, 5000\}$ and report wall-clock timings for compute-matched analysis.

\begin{table}[h]
\centering
\caption{POMCP comparison (5{,}000 total episodes across 5 seeds for Tiger/Diagnosis/Bandit and 1{,}000 for Tileworld). POMCP uses 1{,}000 simulations per decision.}
\label{tab:pomcp}
\begin{small}
\begin{tabular}{llccc}
\toprule
Environment & Agent & Obs. & Success & Reward \\
\midrule
\multirow{3}{*}{Tiger} & Planning & $4.30$ & 99.8\% & $+5.48$ \\
& POMCP (1000) & $1.67$ & 89.3\% & $-3.42$ \\
& \textbf{EFE} & $\mathbf{4.29}$ & $\mathbf{99.2\%}$ & $\mathbf{+4.83}$ \\
\midrule
\multirow{3}{*}{Diagnosis} & Planning & $6.02$ & 86.0\% & $-4.42$ \\
& POMCP (1000) & $3.91$ & 73.1\% & $-10.04$ \\
& \textbf{EFE} & $\mathbf{9.76}$ & $\mathbf{96.6\%}$ & $\mathbf{-1.80}$ \\
\midrule
\multirow{3}{*}{Bandit} & Planning & $3.35$ & 71.8\% & $+5.79$ \\
& POMCP (1000) & $14.41$ & 96.8\% & $+2.51$ \\
& \textbf{EFE} & $\mathbf{5.19}$ & $\mathbf{89.0\%}$ & $\mathbf{+6.42}$ \\
\midrule
\multirow{3}{*}{\shortstack[l]{Tileworld\\[-1pt]{\scriptsize $6{\times}6$}}} & Planning & -- & 74.5\% & $-20.72$ \\
& POMCP (1000) & -- & 6.1\% & $-48.36$ \\
& \textbf{EFE} & -- & $\mathbf{73.0\%}$ & $\mathbf{-21.22}$ \\
\bottomrule
\end{tabular}
\end{small}
\end{table}

\paragraph{Key findings.} POMCP underperforms both EFE and Planning on all environments except Bandit success rate. On Tiger, POMCP commits prematurely (1.67 observations vs.\ 4.29 for EFE), yielding 89.3\% success. On Diagnosis, the gap is dramatic: POMCP achieves only 73.1\% success ($-10.04$ reward) while EFE reaches 96.6\% ($-1.80$ reward). POMCP's random rollout policy cannot evaluate which of the $K{=}2$ diagnostic tests to run. It treats all observation actions equally, whereas EFE's information gain term directly values differential informativeness. On Bandit, POMCP's Monte Carlo exploration achieves 96.8\% success but at extreme reward cost ($+2.51$ vs.\ EFE's $+6.42$): 14.41 inspections vs.\ EFE's 5.19. On Tileworld ($6{\times}6$, $|S|{=}36$), POMCP collapses to 6.1\% success, as the 36 commit actions and 6 scan actions create a branching factor that overwhelms 1{,}000 simulations.

\paragraph{Compute-matched analysis.} Increasing POMCP's simulation budget to 5{,}000 does not qualitatively change the picture: on Diagnosis, POMCP(5000) reaches 71.5\% success, still far short of EFE's 96.6\%, while taking substantially more wall-clock time per decision (299.5 ms/ep vs.\ EFE's closed-form computation). On Tiger, POMCP(5000) reaches 89.0\% but at 5$\times$ the compute. These results demonstrate that EFE's advantage is not merely a compute-budget artifact but reflects the fundamental benefit of closed-form information gain valuation over Monte Carlo exploration, particularly on multi-observation-action environments where the agent must choose which information to gather.

\paragraph{MCTS-EFE.} To disentangle EFE's information valuation from the planning mechanism, we implement MCTS-EFE: MCTS with EFE as the leaf heuristic rather than random rollouts. On Tiger at $H{=}10$, MCTS-EFE with 500 simulations achieves 97.2\% success (7.2s per 200 episodes), compared to POMCP's 89.7\% at matched budget (1.7s) and exact EFE's 99.5\% at $H{=}6$ (11.8s). This confirms that EFE provides a superior leaf evaluation function: the same MCTS framework with EFE rollouts substantially outperforms random rollouts.

\end{document}